\documentclass[11pt]{article}

\newcommand{\arxivbuild}{}

\usepackage[letterpaper, margin=1.1in]{geometry}
\usepackage[T1]{fontenc}
\usepackage[utf8]{inputenc}
\usepackage{lmodern}
\usepackage{inconsolata}
\usepackage{microtype}
\usepackage[skip=0.55\baselineskip plus 2pt]{parskip}

\newcommand{\panelcodefont}{\fontsize{9pt}{11pt}\selectfont}

\usepackage[font=small,labelfont=bf,skip=6pt]{caption}

\usepackage{float}
\usepackage{placeins}   %
\usepackage{needspace}
\renewenvironment{figure*}[1][tbp]{\begin{figure}[#1]}{\end{figure}}
\renewenvironment{table*}[1][tbp]{\begin{table}[#1]}{\end{table}}

\usepackage[round,sort]{natbib}
\usepackage{graphicx}
\usepackage{bm}
\usepackage{multirow}
\usepackage{booktabs}
\usepackage{axlib}      %

\usepackage{thmtools}
\usepackage{thm-restate}
\makeatletter
\newcommand{\restatelabel}[1]{\ifthmt@thisistheone\label{#1}\else\tag{\ref*{#1}}\fi}
\makeatother

\definecolor{stdnavy}{HTML}{1A4F9C}
\hypersetup{colorlinks=true, linkcolor=stdnavy, citecolor=stdnavy, urlcolor=stdnavy,
            pdftitle={Multi-Target Few-Shot Prompting with Diffusion Language Models},
            pdfauthor={Anton Xue, Alan Li, Rahul Saha}}

\usepackage{authblk}

\title{\fontsize{16}{19}\selectfont Multi-Target Few-Shot Prompting with Diffusion Language Models}

\author{Anton Xue}
\author{Alan Li}
\author{Rahul Saha}
\affil{University of Texas at Austin}
\affil{\ttfamily
  \href{mailto:anton.xue@utexas.edu}{anton.xue@utexas.edu}\quad
  \href{mailto:alan.li@cs.utexas.edu}{alan.li@cs.utexas.edu}\quad
  \href{mailto:rahul.saha@utexas.edu}{rahul.saha@utexas.edu}}
\newcommand{\artifactlogo}[3]{%
  \raisebox{-#1\height}{\includegraphics[height=#2]{#3}}%
}
\newcommand{\codelinkrow}{%
  \artifactlogo{0.3}{1.1em}{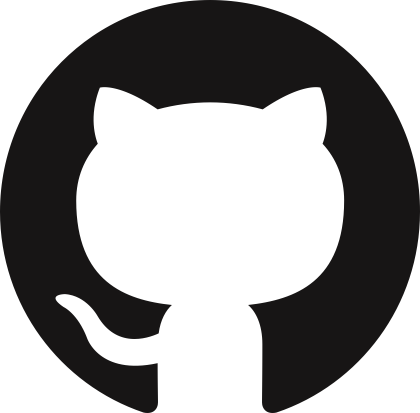}\hspace{0.42em}%
  \href{https://github.com/AntonXue/fewshot-dlms}{\texttt{github.com/AntonXue/fewshot-dlms}}%
}

\affil{\vspace{0.4em}\protect\codelinkrow}
\date{}

\begin{document}

\maketitle

\vspace{\stretch{0.5}}
\begin{abstract}
Few-shot prompting is useful when instructions and rewards are hard to specify.
One such application is editing by example, where a demonstrated formatting change or API migration may need to be applied across multiple sites.
However, such edits often depend on both left and right context, making generation with causal autoregressive models awkward.
We therefore introduce multi-target few-shot prompting with diffusion language models, where examples guide the infilling of multiple masked regions within a shared canvas.
To address the known challenge of reliably following demonstrations in few-shot prompting, we further leverage selective masking of the canvas to derive a likelihood-ratio reward that favors example-supported infills.
Across few-shot prompting benchmarks and software refactoring case studies, we find that demonstrations improve infilling accuracy, with example guidance yielding further gains in several settings.
\end{abstract}

\vspace{\stretch{0.8}}
\ifdefined\arxivbuild\begin{figure*}[!h]\else\begin{figure*}[t]\fi
    \centering
    \includegraphics[width=0.95\textwidth]{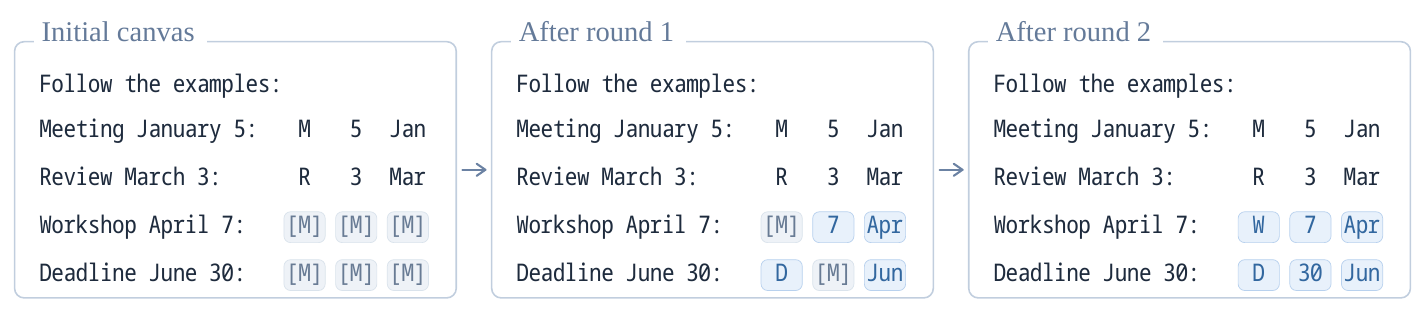}
    \caption{\textbf{Multi-target few-shot prompting with a DLM.}
    Demonstrations map event names and dates to initials, days, and abbreviated months.
    The DLM progressively fills two masked outputs within a shared canvas; tokens revealed at one site can inform subsequent predictions at the other.}
    \label{fig:introduction_teaser}
\end{figure*}

\vspace*{\stretch{2.4}}
\clearpage

\ifdefined\arxivbuild\else

\fi

\section{Introduction}
\label{sec:introduction}

Few-shot prompting lets users specify model behavior through demonstrations, without updating model parameters~\citep{brown2020language,garg2022what}.
This is useful for test-time guidance, especially when instructions or rewards are difficult to specify~\citep{zhou2023large,zamfirescupereira2023why,kwon2023reward}.
For instance, in editing by example, a demonstrated formatting change or API migration can illustrate a transformation to apply elsewhere~\citep{gupta2023grace,wei2024coeditor}.
When a demonstrated edit must be applied at multiple sites, a natural approach is to generate the tokens in place within a shared canvas containing all demonstrations and relevant context.
We call this \textbf{multi-target few-shot prompting}.

However, causal autoregressive models generate outputs as continuations, making it awkward to infill a target site using right-hand-side context.
Incorporating such context requires rearranging the prompt or using specialized infilling formats~\citep{bavarian2022efficient,fried2022incoder}, a process that becomes more cumbersome when editing multiple sites.
Moreover, pretrained associations and sensitivity to prompt design can interfere with the model's adherence to demonstrations~\citep{min2022rethinking,wei2023larger,zhao2021calibrate}.
Although guidance methods have been developed to mitigate these issues, they are primarily designed for autoregressive models and do not directly apply to in-place editing~\citep{shi2024trusting,peng2025enhancing}.

Diffusion language models (DLMs) offer a natural interface for these tasks because they are not restricted to next-token prediction and can generate tokens anywhere on the shared canvas~\citep{austin2021structured,sahoo2024simple,shi2024simplified}.
As illustrated in \Cref{fig:introduction_teaser}, the missing output regions are initially masked and progressively filled, while demonstrations and surrounding context remain visible.
Each generated token can therefore condition on both left and right context, avoiding the specialized prompting harnesses often needed in causal autoregressive generation~\citep{ou2025your}.
Moreover, jointly infilling multiple sites allows tokens revealed at one site to inform subsequent predictions at others, helping maintain consistency~\citep{kim2026dapd}.

Yet several challenges remain in using DLMs for multi-target few-shot prompting.
First, infilling requires allocating masked positions before generation, even though the required output lengths are generally unknown~\citep{xu2026predict,zhang2025flexible}.
Second, although DLMs condition on demonstrations, they may still fail to follow them reliably.
Context-aware decoding suggests that contrasting predictions across contexts could strengthen adherence to demonstrations~\citep{malkin2022coherence,shi2024trusting}, but extending such guidance to complete DLM infills requires joint likelihoods that are often intractable or decoder-specific~\citep{turok2026duel}.

In this work, we study how to overcome such challenges.
We state our contributions next.

\subsection{Contributions}
\label{sec:introduction:contributions}

\paragraph{Multi-target few-shot prompting.}
We formulate this task as infilling multiple masked regions of a shared canvas containing demonstrations and surrounding context.
We show that frozen DLMs can generate the missing outputs in place using the visible canvas directly, avoiding the specialized prompting harnesses often needed for causal autoregressive infilling.

\paragraph{Example-guided sampling.}
We construct a tractable distribution over complete multi-site infills by revealing one token at a time, allowing us to compare candidate likelihoods with and without demonstrations.
The resulting likelihood-ratio reward enables inference-time importance resampling to favor example-supported completions, offering an additional way to encourage adherence to demonstrations without retraining the generator.

\paragraph{Empirical evaluation.}
We evaluate the effects of demonstrations, target counts, example guidance, and learned length prediction across repurposed few-shot prompting benchmarks and software refactoring case studies.
We find that demonstrations generally improve infilling accuracy and guidance provides additional gains in some settings, but accurate length prediction remains a challenge.

\subsection{Related Work}
\label{sec:related_work}

\paragraph{Few-shot prompting.}
Few-shot prompting specifies tasks through demonstrations~\citep{brown2020language}, but mappings and placement can affect model behavior~\citep{min2022rethinking,wei2023larger,li2026where}.
Applications include information extraction and semantic parsing~\citep{pang2023guideline,shin2022fewshot}, as well as programming by example and code editing, where demonstrations or prior edits guide subsequent transformations~\citep{gulwani2011automating,meng2011systematic,gupta2023grace,wei2024coeditor}.
Prior work also explores batching multiple inputs into a single prompt, allowing their answers to be generated in one model call~\citep{cheng2023batch,lin2024batchprompt}.
We instead study joint infilling of multiple unresolved regions in place within a shared canvas containing demonstrations and surrounding context.

\paragraph{Test-time guidance.}
Beyond few-shot prompting, various test-time guidance methods have been developed to steer model generation.
Contrastive decoding compares predictions from different models~\citep{li2023contrastive,obrien2023contrastive}, while context-aware methods compare predictions under different conditioning contexts to strengthen the influence of supplied evidence or demonstrations~\citep{malkin2022coherence,shi2024trusting,peng2025enhancing,song2025instantly}.
In diffusion models, classifier-free guidance adjusts the influence of conditioning information~\citep{ho2022classifier}, while KL-regularized reward tilting~\citep{korbak2022rl,rafailov2023direct} and inference-time resampling~\citep{singhal2025general} provide mechanisms for reweighting candidate generations.
However, extending likelihood-based comparisons to complete DLM infills requires tractable sequence-level likelihoods under a specified decoding procedure.

\paragraph{Diffusion language models.}
Masked diffusion language models iteratively unmask positions within a shared canvas, supporting flexible generation orders and parallel decoding~\citep{austin2021structured,sahoo2024simple,shi2024simplified}.
Recent models include LLaDA~\citep{nie2025large}, Dream~\citep{ye2025dream}, Nemotron-Labs-Diffusion~\citep{fu2026nemotron}, and dQwen3.5~\citep{xue2026dqwen}, alongside diffusion-augmented autoregressive approaches such as Uno~\citep{sahoo2026unlocking}.
Standard masked infilling also requires allocating output lengths in advance, motivating work on flexible-length generation and length prediction~\citep{xu2026predict,zhang2025flexible,ding2026beyond}.
Prior work shows that decoding strategies affect generation quality and efficiency~\citep{arriola2025block,wu2025fastdllm,kim2026dapd}, while the tractability of complete-sequence likelihoods depends on the decoding procedure~\citep{turok2026duel}.

\begin{figure*}[t]
    \centering
    \includegraphics[width=\textwidth]{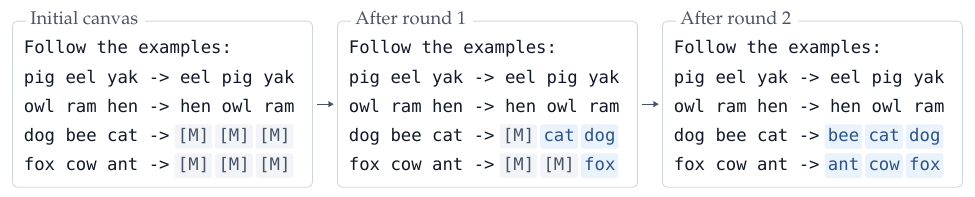}
    \caption{\textbf{Further examples of multi-target few-shot prompting.}
    Two input--output demonstrations illustrate alphabetical sorting of word lists, which a DLM applies to two missing outputs.}
    \label{fig:background_multitarget_denoising}
\end{figure*}

\ifdefined\arxivbuild\else
\begin{figure*}[t]
    \centering
    \includegraphics[width=0.95\textwidth]{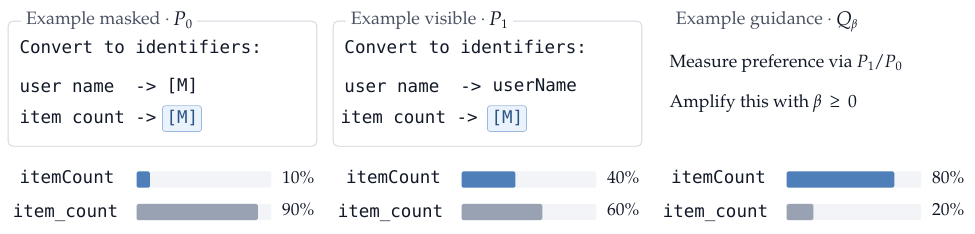}
    \caption{\textbf{Example guidance strengthens a demonstrated naming convention.}
    With the demonstration output masked (\(P_0\)), the model favors \texttt{item\_count}; revealing it (\(P_1\)) increases support for \texttt{itemCount}, but does not change the ranking.
    At an illustrative \(\beta=1\), example guidance (\(Q_\beta\)) amplifies this shift and favors \texttt{itemCount}.
    Probabilities are schematic and normalized over the two completions of the blue mask; \texttt{[M]} denotes a masked span.}
    \label{fig:methods_example_tilting}
\end{figure*}

\fi

\section{Background}
\label{sec:background}

\subsection{Few-Shot Prompting on a Shared Canvas}
\label{sec:background:few_shot_prompting}

We consider a partially observed canvas containing \(D\) demonstrations, \(J\) target sites, and relevant surrounding context.
Let \(z \in (V \cup \{m\})^N\) denote this canvas, where \(V\) is the vocabulary, \(m\) is a mask token, and \(N\) is the canvas length.
The original target positions are \(T = M(z) = \{i \in [N] : z_i = m\}\), where \([N] = \{1,\ldots,N\}\); these positions form \(J\) masked output regions.
The task is to fill these regions while preserving the visible content of the canvas.

In \Cref{fig:background_multitarget_denoising}, \(D=2\) demonstrations illustrate alphabetical sorting, while \(J=2\) missing outputs occupy six masked positions in total (\(|T|=6\) in the schematic).
At the target sites, the original word lists remain visible, while their sorted versions must be generated.
More generally, demonstrations and targets may be interleaved with surrounding context; conventional few-shot prompting is recovered when a single target follows the demonstrations.
For now, we assume the target sites are specified and their output lengths are given by an oracle, fixing the mask allocation before generation.

\subsection{Masked Diffusion Language Models}
\label{sec:background:masked_diffusion_language_models}

Like BERT-style masked language modeling~\citep{devlin2019bert}, masked DLMs predict missing tokens using visible context on both sides of each masked position~\citep{austin2021structured,sahoo2024simple,shi2024simplified}.
At each masked position \(i \in M(z)\), the model predicts a distribution \(P_{\mathrm{DLM}}(v_i \mid z)\) over candidate tokens \(v_i \in V\).
Turning these predictions into complete infills requires a decoding algorithm that determines which masked positions to reveal and when.
Common strategies select positions by prediction confidence or a prescribed order, potentially revealing multiple tokens in parallel~\citep{wu2025fastdllm,kim2026dapd,fruytier2026pig}.
As illustrated in \Cref{fig:background_multitarget_denoising}, newly revealed tokens become visible context for subsequent predictions, including those at other target sites.
Different decoding procedures can induce different distributions over complete infills, so we seek a decoder that permits exact sampling and scoring.

\ifdefined\arxivbuild\else
\begin{figure*}[t]
    \centering
    \includegraphics[width=0.95\textwidth]{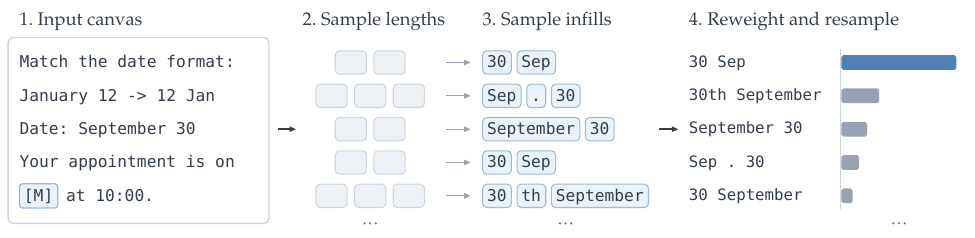}
    \caption{\textbf{Inference with unknown target lengths.}
    A length predictor runs once on the compact canvas; each sample then draws a length allocation and generates a DLM infill over the expanded masks.
    Samples receive reward-guidance weights \(w_k = (P_1(L^{(k)},u^{(k)})/P_0(L^{(k)},u^{(k)}))^\beta\), comparing their likelihoods with demonstration outputs visible (\(P_1\)) and masked (\(P_0\)).
    Resampling according to these weights approximates \(Q_\beta\).
    Bars aggregate normalized weights over identical completions; tokens, samples, and weights are schematic.}
    \label{fig:methods_pipeline}
\end{figure*}

\fi

\section{Multi-Target Few-Shot Prompting and Reward Guidance}
\label{sec:methods}

While \Cref{sec:background} assumes known output lengths, practical multi-target few-shot prompting must generally determine both the lengths and contents of missing outputs.
We formulate this joint generation problem using a shared canvas with one placeholder per target site.
To characterize the complete infills produced by a DLM, we construct tractable likelihoods under one-token-at-a-time decoding.
These likelihoods additionally enable reward guidance to encourage adherence to the demonstrations.

\subsection{Length Allocation and Joint Infilling}
\label{sec:methods_multitarget}

We begin with a compact canvas in which each missing output is represented by a single placeholder.
Let \(\widetilde z\) denote this canvas, containing \(J\) target placeholders together with the visible demonstrations and surrounding context.
For a proposed length vector \(L=(L_1,\ldots,L_J)\), where each \(L_j\geq1\) specifies the token length of target \(j\), we expand each placeholder into \(L_j\) mask tokens.
This produces an expanded canvas \(z_L\) with target positions \(T_L=M(z_L)\), on which the DLM can generate the missing contents.

We model joint generation of output lengths and contents by first allocating lengths and then infilling the resulting canvas.
Let \(\pi_\psi(L\mid\widetilde z)\) be a distribution over length allocations, with each \(L_j\) ranging over a prescribed finite set of positive integers.
For a given allocation, let \(u=(u_i)_{i\in T_L}\in V^{T_L}\) denote a complete infill, assigning a vocabulary token \(u_i\in V\) to each target position.
We define the joint distribution
\begin{equation}
    P(L,u\mid\widetilde z)
    :=
    \pi_\psi(L\mid\widetilde z)\,
    P(u\mid z_L).
    \label{eq:methods_joint_generation}
\end{equation}
The allocator may be a point mass at oracle lengths or a learned predictor, as investigated in \Cref{sec:experiments}.
To complete this formulation, we next characterize the distribution \(P(u\mid z_L)\) induced by DLM decoding.

\subsection{Tractable DLM Infilling}
\label{sec:methods_infilling}

A DLM provides predictions at individual masked positions, but different decoding procedures can induce different distributions over complete infills (\Cref{sec:background}).
To obtain tractable likelihoods, we focus on decoders that reveal one token at a time, including prescribed orders and deterministic confidence-based selection.

Fix an expanded canvas \(z=z_L\) with target positions \(T=M(z)\), and let \(u=(u_i)_{i\in T}\in V^T\) be a candidate completion.
At step \(s\), the decoder selects an unrevealed target position \(i_s\) based on the current canvas, then samples its token from the DLM.
Let \(z^{(s)}\) denote the canvas after revealing the first \(s\) tokens of \(u\), with \(z^{(0)}=z\).
For deterministic position selection, replaying the policy on \(u\) determines the reveal order and yields the complete-infill likelihood
\begin{equation}
    P(u\mid z)
    :=
    \prod_{s=1}^{|T|}
    P_{\mathrm{DLM}}(u_{i_s}\mid z^{(s-1)}).
    \label{eq:background_any_order}
\end{equation}
The following lemma establishes the properties of this construction.

\begin{restatable}[One-token-at-a-time decoding]{lemma}{fixedOrderDecoding}
\label{lem:background_any_order}
For any deterministic one-token-at-a-time position-selection policy, \Cref{eq:background_any_order} defines a normalized distribution \(P(u\mid z)\) that can be sampled and scored exactly.
If the DLM predicts exact data conditionals, then \(P(u\mid z)\) recovers the conditional data distribution wherever defined, independently of the reveal policy~\citep{ou2025your}.
\end{restatable}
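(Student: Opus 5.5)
The plan is to view the decoder as a finite-depth tree and argue in three steps: normalization by induction over the tree, exact sampling and scoring by forward simulation and replay, and the exact-conditional claim via the chain rule.

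\textbf{Normalization.} Fix the canvas \(z\) with \(|T|=n\). Since the policy is deterministic, the position \(i_s\) chosen at step \(s\) is a function of the current canvas \(z^{(s-1)}\). That canvas is in turn determined by \(z\) and the tokens revealed so far. So each internal node of the tree is a partially revealed canvas, and its children are indexed by the token \(v\in V\) placed at the selected position. I would show by induction on the number \(k\) of remaining masked positions that
\[
F(z') := \sum_{u'} \prod_{s} P_{\mathrm{DLM}}(u'_{i_s}\mid z'^{(s-1)}) = 1
\]
for any partial canvas \(z'\) with \(k\) masks, where the sum ranges over completions of the remaining masks. The base case \(k=0\) is the empty product, equal to \(1\). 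For the inductive step, let \(i\) be the position chosen at \(z'\) and factor out the first term:
\[
F(z') = \sum_{v\in V} P_{\mathrm{DLM}}(v\mid z')\,F(z'[i\!\leftarrow\! v]).
\]
By induction each \(F(z'[i\!\leftarrow\! v])\) equals \(1\), so \(F(z')=\sum_v P_{\mathrm{DLM}}(v\mid z')=1\) because each per-position prediction is a distribution over \(V\). The one subtlety is that distinct \(u\) must correspond to distinct leaves. This holds because the reveal order is recovered deterministically by replaying the policy on \(u\), so the map from \(u\) to root-to-leaf paths is a bijection.

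\textbf{Exact sampling and scoring.} Sampling follows by forward simulation: at each step, select \(i_s\), draw \(u_{i_s}\sim P_{\mathrm{DLM}}(\cdot\mid z^{(s-1)})\), and write the token into the canvas. This draws a path with exactly the probability in \Cref{eq:background_any_order}. Scoring an arbitrary \(u\) is done by replaying: starting from \(z\), compute \(i_s\) from the current canvas, accumulate the factor \(P_{\mathrm{DLM}}(u_{i_s}\mid z^{(s-1)})\), and reveal \(u_{i_s}\). This costs \(n\) forward passes.

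\textbf{Exact data conditionals.} Now assume \(P_{\mathrm{DLM}}(v_i\mid z')=p_{\mathrm{data}}(x_i=v\mid x_{\text{visible}(z')}=z'_{\text{visible}})\) whenever the conditioning event has positive probability. For a fixed \(u\), the reveal order \(i_1,\dots,i_n\) is a fixed permutation of \(T\), because it is determined by \(u\). The product in \Cref{eq:background_any_order} is then exactly the chain-rule factorization of \(p_{\mathrm{data}}(x_T=u\mid x_{\bar T}=z_{\bar T})\) in the order \(i_1,\dots,i_n\). Each factor conditions on the originally visible tokens together with \(u_{i_1},\dots,u_{i_{s-1}}\). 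The chain rule holds for every ordering, and it still holds when the ordering depends on \(u\), because for each fixed \(u\) we apply it to that single ordering. The product therefore equals the data conditional regardless of the policy, which is the argument of \citet{ou2025your}.

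\textbf{Main obstacle.} The delicate part is the qualifier ``wherever defined'': conditionals on zero-probability prefixes are undefined. I would handle this by noting that if some prefix has probability zero, then both sides of the identity vanish along that path, since an earlier factor is zero. The identity therefore holds on the support, and only positive-probability conditionals are ever needed. I would state this convention explicitly.
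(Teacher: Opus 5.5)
Your proposal is correct and follows the paper's proof essentially step for step. Both use the decision-tree normalization (your induction on the number of remaining masks just makes explicit the leaf-sum step that the paper asserts), replay for exact scoring, and a chain-rule/telescoping argument for exact conditionals, with the first zero-probability step handling the ``wherever defined'' case. The one cosmetic difference is that the paper conditions on the initially visible positions \(O\) rather than on \(\bar T\), so that masked non-target positions (as in \(P_0\)) may stay unresolved; for the lemma as stated, with \(T = M(z)\), the two coincide.
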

\ifdefined\arxivbuild

\fi

Exact sampling and scoring hold for the induced distribution regardless of whether the DLM matches the data conditionals.
Recovery of the data distribution requires exact conditional predictions; learned DLMs need not satisfy this assumption, and their induced distributions may depend on the decoding policy.
While our construction excludes parallel token revelation, it permits generating and scoring only the designated targets \(T\) while other masked positions remain unresolved.

\subsection{Reward Guidance}
\label{sec:methods_sampling}

Even with demonstrations visible, a DLM may favor completions that do not follow their demonstrated patterns, as illustrated in \Cref{fig:methods_example_tilting}.
The tractable likelihoods above allow us to measure how strongly the demonstrations support each candidate and use this signal for additional reward guidance.
Let \(E\subseteq[N]\setminus T\) denote selected demonstration-output positions, and write \(z\setminus E\) for the canvas with these positions additionally masked.
We compare the likelihood of the same candidate with and without these demonstration outputs visible:
\[
    P_1(u):=P(u\mid z),
    \qquad
    P_0(u):=P(u\mid z\setminus E).
\]
Both distributions use the same decoding policy over the original target positions \(T\), leaving the additional masks in \(E\) unresolved.
Their log-likelihood ratio defines the reward
\begin{equation}
    R(u)
    :=
    \log\frac{P_1(u)}{P_0(u)}.
    \label{eq:methods_reward}
\end{equation}
A higher reward indicates greater relative support from the demonstrations, rather than necessarily greater correctness.
We incorporate this reward into generation by defining the guided distribution
\begin{equation}
    Q_\beta(u)
    :=
    \frac{P_1(u)\,\exp(\beta R(u))}{Z_\beta},
    \label{eq:methods_tilting}
\end{equation}
where \(\beta\geq0\) controls guidance strength and \(Z_\beta\) is the normalizing constant~\footnote{This form of reward guidance is also known as exponential tilting~\citep{korbak2022rl}}.
At \(\beta=0\), we recover unguided sampling from \(P_1\), which already conditions on the demonstrations.
Increasing \(\beta\) favors candidates with higher rewards, potentially counteracting a pretrained model's preference for familiar but demonstration-inconsistent completions.

\ifdefined\arxivbuild

\fi

However, direct sampling from \(Q_\beta\) is generally intractable because its sequence-level reward does not yield tractable token-level conditionals.
Instead, we approximate reward-guided generation using importance resampling from \(P_1\).
We draw \(K\) independent candidates \(u^{(k)}\sim P_1\), evaluate their likelihoods under \(P_0\), and resample an index \(I\) according to
\begin{equation}
    \Pr(I=k\mid\text{pool})
    =
    \frac{w_k}{\sum_{h=1}^{K}w_h},
    \qquad
    w_k=\exp(\beta R(u^{(k)})).
    \label{eq:methods_resampling}
\end{equation}
These weights are proportional to \(Q_\beta/P_1\), allowing finite-pool resampling to approximate the guided distribution without modifying the DLM.
The following proposition formalizes the KL-regularized interpretation of reward guidance and the consistency of its approximation.

\begin{restatable}[Reward guidance and importance sampling]{proposition}{exampleTilting}
\label{prop:methods_sampling}
Let \(P_0\) and \(P_1\) be positive distributions on a common finite candidate space, and let \(\beta\geq0\).
The distribution \(Q_\beta\) defined by \Cref{eq:methods_tilting} uniquely solves
\begin{equation}
    Q_\beta
    =
    \argmax{Q}
    \braces[\big]{
        \beta\,\expval{U\sim Q} R(U)
        - \KL{Q}{P_1}
    }.
    \restatelabel{eq:methods_kl_characterization}
\end{equation}
Moreover, the weighted empirical distribution obtained from independent samples of \(P_1\) using \Cref{eq:methods_resampling} converges almost surely to \(Q_\beta\) in total variation as \(K\to\infty\).
\end{restatable}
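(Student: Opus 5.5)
The variational characterization follows from the Gibbs variational principle, obtained by rewriting the objective as a negative KL divergence. For any distribution \(Q\) on the finite candidate space, substituting \(R=\log(P_1/P_0)\) and expanding gives
\[
\beta\,\mathbb{E}_{Q}[R(U)] - \mathrm{KL}(Q\,\|\,P_1)
= \sum_u Q(u)\log\frac{P_1(u)\exp(\beta R(u))}{Q(u)}
= \log Z_\beta - \mathrm{KL}(Q\,\|\,Q_\beta).
\]
Here \(Z_\beta=\sum_u P_1(u)^{1+\beta}P_0(u)^{-\beta}\). Positivity of \(P_0\) and \(P_1\) on a finite space makes \(R\) finite everywhere and \(0<Z_\beta<\infty\), so \(Q_\beta\) is a well-defined positive distribution. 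Since \(\log Z_\beta\) does not depend on \(Q\), maximizing the objective is the same as minimizing \(\mathrm{KL}(Q\,\|\,Q_\beta)\). By Gibbs' inequality this KL is nonnegative and vanishes iff \(Q=Q_\beta\), which gives both existence and uniqueness of the maximizer.

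I only need to state the convention \(0\log 0=0\) for candidates with \(Q(u)=0\). Any \(Q\) is absolutely continuous with respect to the fully supported \(P_1\), so every term in the objective is finite.

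For the consistency claim, let \(u^{(1)},u^{(2)},\dots\) be i.i.d.\ draws from \(P_1\) with weights \(w_k=\exp(\beta R(u^{(k)}))=P_1(u^{(k)})^{\beta}/P_0(u^{(k)})^{\beta}\). Define
\[
\widehat Q_K(u)=\frac{\sum_{k\le K} w_k\,\mathbf{1}\{u^{(k)}=u\}}{\sum_{k\le K} w_k}.
\]
Writing \(w(u)=\exp(\beta R(u))\), the numerator divided by \(K\) is \(\frac1K\sum_k w(u)\mathbf{1}\{u^{(k)}=u\}\). By the strong law of large numbers it converges a.s.\ to \(P_1(u)w(u)\). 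The denominator divided by \(K\) converges a.s.\ to \(\sum_u P_1(u)w(u)=Z_\beta\), and both limits are finite because the space is finite and \(w\) is bounded. Taking a ratio of a.s.\ limits with \(Z_\beta>0\) gives \(\widehat Q_K(u)\to Q_\beta(u)\) almost surely for each fixed \(u\).

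Because the candidate space is finite, a finite union of null sets is null, so convergence holds simultaneously for all \(u\) on a single probability-one event. There \(\|\widehat Q_K-Q_\beta\|_{\mathrm{TV}}=\frac12\sum_u|\widehat Q_K(u)-Q_\beta(u)|\to0\).

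The only delicate step is making the almost-sure statement uniform over candidates, and finiteness of the space handles it immediately. If needed, I would add that the resampled index \(I\) has conditional law \(\widehat Q_K\), so its marginal law also converges to \(Q_\beta\) by bounded convergence.
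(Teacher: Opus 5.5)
Your proposal is correct and follows the paper's proof essentially step for step. It rewrites the objective as \(\log Z_\beta - \mathrm{KL}(Q\,\|\,Q_\beta)\) with Gibbs' inequality giving uniqueness, then applies the strong law to the numerator and denominator and uses finiteness of the candidate space to conclude total-variation convergence. Your closing remark about the marginal law of the resampled index is a harmless extra.
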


\ifdefined\arxivbuild\else
\begin{figure*}[t]
\centering
\includegraphics[width=\textwidth]{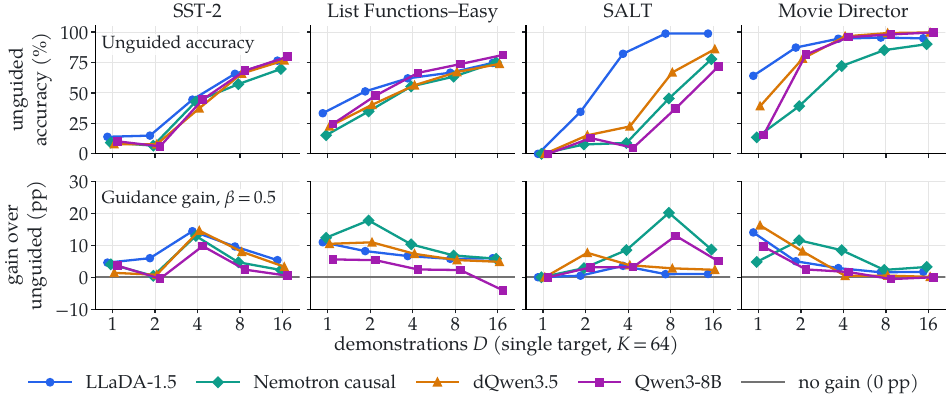}
\caption{\textbf{Single-target prompting as the number of demonstrations grows.}
The top row shows unguided answer accuracy (\(\beta=0\)), while the bottom row shows the accuracy gain from example guidance (\(\beta=0.5\)), obtained by reweighting the same \(K=64\) samples per canvas.
Four benchmarks and four model interfaces, with 64 prompt families per benchmark; DLMs use oracle target lengths, while Qwen3 generates freely.}
\label{fig:completion_evidence}
\end{figure*}

\fi

\paragraph{Extension to unknown lengths.}
Reward guidance also applies to joint assignments of output lengths and contents.
Let \(E_L\) denote the selected demonstration-output positions in the expanded canvas \(z_L\).
Using the factorization from \Cref{eq:methods_joint_generation}, we define
\begin{equation}
    P_1(L,u)
    :=
    \pi_\psi(L\mid\widetilde z)\,
    P(u\mid z_L),
    \qquad
    P_0(L,u)
    :=
    \pi_\psi(L\mid\widetilde z)\,
    P(u\mid z_L\setminus E_L).
    \label{eq:methods_joint_lengths}
\end{equation}
Since both contexts use the same allocator, its contribution cancels from the reward \(R(L,u)=\log[P_1(L,u)/P_0(L,u)]\), although resampling joint candidates can still change the marginal distribution over lengths.
The same guidance procedure and \Cref{prop:methods_sampling} apply to these joint candidates (see \Cref{app:proofs:joint_length}).
\Cref{fig:methods_pipeline} summarizes the complete procedure, from length allocation and DLM infilling to optional reward-guided resampling.

\ifdefined\arxivbuild

\fi

\section{Few-Shot Prompting Experiments with Standard Benchmarks}
\label{sec:experiments}

We first quantify the effects of demonstrations and example guidance in single-target infilling (\Cref{sec:experiments:completion}).
We then investigate how these effects change when multiple targets are infilled jointly (\Cref{sec:experiments:multitarget}).
Finally, we evaluate the impact of predicting target lengths rather than supplying them through an oracle (\Cref{sec:experiments:lengths}).
We defer additional experiments to~\Cref{app:additional_experiments}, such as on guidance strength (\Cref{app:guidance_strength}) and likelihood sharpening (\Cref{app:power_details}).

\ifdefined\arxivbuild\else
\begin{figure*}[t]
\centering
\includegraphics[width=\textwidth]{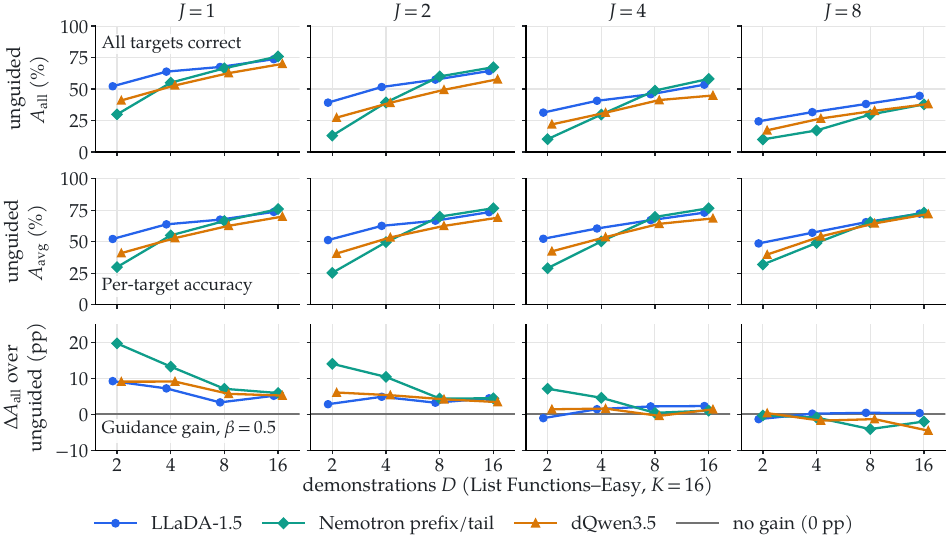}
\caption{\textbf{Demonstrations and target count on List Functions--Easy.}
Each column fixes the number of jointly infilled targets \(J\), while the horizontal axis varies the number of demonstrations \(D\).
The rows show all-target accuracy \(A_{\mathrm{all}}\) and average per-target accuracy \(A_{\mathrm{avg}}\) under unguided sampling (\(\beta=0\)), followed by the gain in \(A_{\mathrm{all}}\) from example guidance (\(\beta=0.5\)) using the same samples.
Three DLMs, oracle target lengths, and \(K=16\) samples per canvas.}
\label{fig:multitarget_evidence}
\end{figure*}

\fi

\paragraph{Experimental setup.}
We evaluate four benchmarks spanning different few-shot transformations: SST-2~\citep{socher2013recursive,wang2018glue} with demonstration-defined sentiment codes, List Functions--Easy~\citep{srivastava2022beyond,rule2020child} for list transformations, SALT repetition~\citep{zheng2025curse,zheng2025logidynamics} for artificial-language transformations, and Movie Director~\citep{liu2012conversational,zhao2021calibrate} for information extraction.
We use three DLMs: LLaDA-1.5~\citep{zhu2026llada}, Nemotron-Labs-Diffusion-8B~\citep{fu2026nemotron}, and dQwen3.5-9B-Base~\citep{xue2026dqwen}, together with Qwen3-8B~\citep{yang2025qwen3} as an autoregressive single-target comparison.

Throughout, \(D\) and \(J\) denote the numbers of demonstrations and targets, respectively.
We refer to generation without additional guidance (\(\beta=0\)) as \emph{standard infilling} for DLMs, or equivalently \emph{unguided sampling}, and compare this with example guidance at \(\beta=0.5\) by default.
Outputs are graded by task-specific exact or parsed matching, and the DLM experiments reveal one target token at a time in ascending canvas order.
Further experimental details appear in \Cref{app:experiment_setup}.

\subsection{Single-Target Few-Shot Prompting}
\label{sec:experiments:completion}

We begin with single-target infilling, which closely resembles conventional autoregressive few-shot prompting, except that DLMs generate a masked output rather than a continuation.
Since few-shot prompting is sensitive to demonstration selection and ordering~\citep{zhao2021calibrate}, we hold the target fixed when comparing different demonstration counts.
We first randomly draw a sequence of \(16\) demonstrations from the benchmark, followed by one target.
For each \(D\in\{1,2,4,8,16\}\), we construct a prompt by retaining only the first \(D\) demonstrations while keeping the target unchanged.
These five related prompts form a \emph{prompt family}, allowing us to vary the demonstration count without changing the target or previously included demonstrations.
We evaluate \(64\) families per benchmark across four tasks and four model interfaces, drawing \(K=64\) samples for each prompt.
\Cref{fig:completion_evidence} reports standard infilling accuracy and the gain from guidance.

Increasing the demonstration count from \(D=1\) to \(D=16\) improves accuracy without additional guidance in all \(16\) model--task series (\Cref{fig:completion_evidence}, top).
For LLaDA, unguided accuracy rises from \(14.0\%\) to \(76.5\%\) on SST-2, from \(33.3\%\) to \(74.7\%\) on List Functions--Easy, from \(0.0\%\) to \(98.8\%\) on SALT repetition, and from \(64.1\%\) to \(94.9\%\) on Movie Director.
This shows that demonstrations provide useful conditioning information even before applying reward guidance.
Consistent with this observation, the reference answer's example reward generally increases with \(D\), indicating stronger relative likelihood support from the demonstrations (\Cref{app:single_target_details}).

Beyond these improvements from demonstration conditioning, example guidance yields further accuracy gains in several settings, particularly SST-2 at intermediate demonstration counts (\Cref{fig:completion_evidence}, bottom).
At \(D=4\), guidance improves LLaDA's accuracy by \(14.4\) percentage points on SST-2, \(6.6\) on List Functions--Easy, \(3.6\) on SALT repetition, and \(2.9\) on Movie Director, where unguided accuracy already exceeds \(94\%\).
These gains tend to shrink as standard infilling accuracy saturates, and depend on whether the reward favors correct samples.

\ifdefined\arxivbuild\else
\begin{figure*}[t]
\centering
\includegraphics[width=\textwidth]{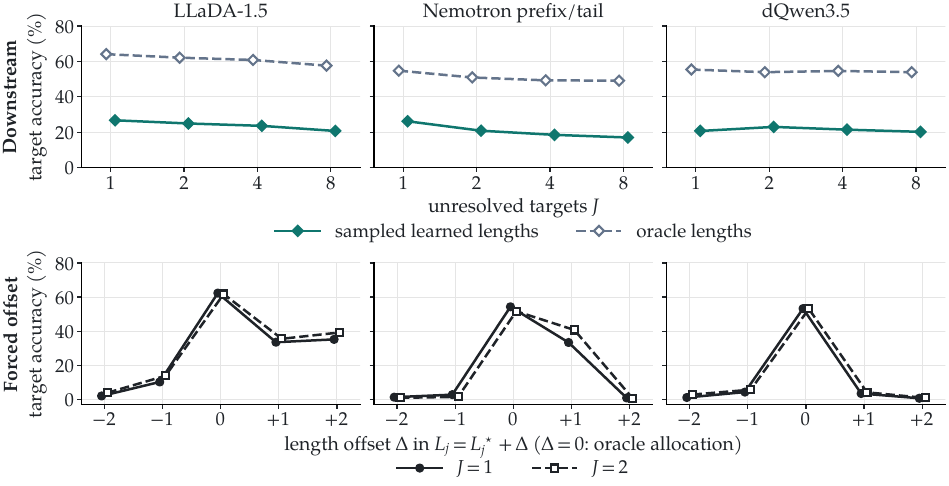}
\caption{\textbf{Length allocation and downstream infilling accuracy.}
On List Functions--Easy (\(D=4\), \(\beta=0\)), the top row compares per-target accuracy \(A_{\mathrm{avg}}\) under oracle and sampled learned lengths as the number of targets \(J\) increases.
The bottom row measures sensitivity to length errors, setting each target length to \(L_j=L_j^\star+\Delta\) for \(J\in\{1,2\}\), where \(L_j^\star\) is the reference length and \(\Delta=0\) recovers oracle allocation.
These forced offsets are distinct from the predictor's own error distribution.
Results cover three DLMs; additional length-prediction diagnostics appear in \Cref{app:length_details}.}
\label{fig:predicted_length_infilling}
\end{figure*}

\fi

\subsection{Multi-Target Few-Shot Prompting}
\label{sec:experiments:multitarget}

We next examine how demonstration conditioning and example guidance behave when multiple targets are infilled jointly.
For each prompt family, we randomly draw \(16\) demonstrations and \(8\) targets from List Functions--Easy, then interleave them into a shared sequence.
For each configuration \(D\in\{2,4,8,16\}\) and \(J\in\{1,2,4,8\}\), we construct a canvas by removing all but the first \(D\) demonstrations and first \(J\) targets from this sequence.
The retained items preserve their relative order, allowing us to vary demonstration and target counts without independently reshuffling the canvas.
For each resulting canvas, the DLM draws \(K=16\) joint samples using oracle target lengths, which we evaluate under standard infilling and additional example guidance.

\ifdefined\arxivbuild

\fi

With multiple targets, we distinguish between completing every target correctly and maintaining accuracy at individual sites.
Let \(c_{kj}\in\{0,1\}\) indicate whether target \(j\) is correct in sample \(k\), and let \(\widetilde w_k\) denote its normalized resampling weight.
We define all-target accuracy \(A_{\mathrm{all}}\) and average per-target accuracy \(A_{\mathrm{avg}}\) by
\begin{equation*}
    A_{\mathrm{all}}
    := \sum_{k=1}^{K}\widetilde w_k\prod_{j=1}^{J}c_{kj},
    \qquad
    A_{\mathrm{avg}}
    := \sum_{k=1}^{K}\widetilde w_k\,
        \parens*{\frac{1}{J}\sum_{j=1}^{J}c_{kj}}.
\end{equation*}
For each sample pool, \(A_{\mathrm{all}}\) is the probability that resampling selects an infill with every target correct, whereas \(A_{\mathrm{avg}}\) is the expected fraction of correct targets.
We average these quantities across prompt families; they coincide when \(J=1\).
\Cref{fig:multitarget_evidence} reports both under standard infilling and the guidance gain in \(A_{\mathrm{all}}\) at \(\beta=0.5\) (\Cref{app:experiment_setup}).

\ifdefined\arxivbuild

\fi

Demonstrations continue to improve joint infilling, with standard all-target accuracy increasing at every successive demonstration count (\Cref{fig:multitarget_evidence}, top).
For LLaDA with \(J=8\), \(A_{\mathrm{all}}\) rises from \(24.3\%\) at \(D=2\) to \(44.5\%\) at \(D=16\).
Increasing the number of targets, however, affects all-target and per-target accuracy differently.
At \(D=16\), LLaDA maintains nearly the same \(A_{\mathrm{avg}}\) as \(J\) grows from \(1\) to \(8\) (\(73.6\%\) versus \(72.3\%\)), despite a decline in all-target accuracy (\Cref{fig:multitarget_evidence}, top and middle).
Thus, lower joint success need not imply a comparable decline in accuracy at individual target sites.

The benefits of additional example guidance, however, are less consistent as the number of targets grows.
On List Functions--Easy, guidance at \(\beta=0.5\) yields positive gains with confidence intervals above zero in every \(J=1\) condition, but in none of the \(J=8\) conditions (\Cref{fig:multitarget_evidence}, bottom).
This pattern is not universal across tasks: a separate cross-benchmark study at \(D=4\), \(K=64\), and \(J=8\) finds guidance gains on SST-2 but losses on Movie Director (\Cref{app:mj_grid}).
Together, these results indicate that guidance effectiveness depends on both the task and the number of jointly infilled targets.

\ifdefined\arxivbuild\else
\begin{figure*}[t]
    \centering
    \includegraphics[width=\textwidth]{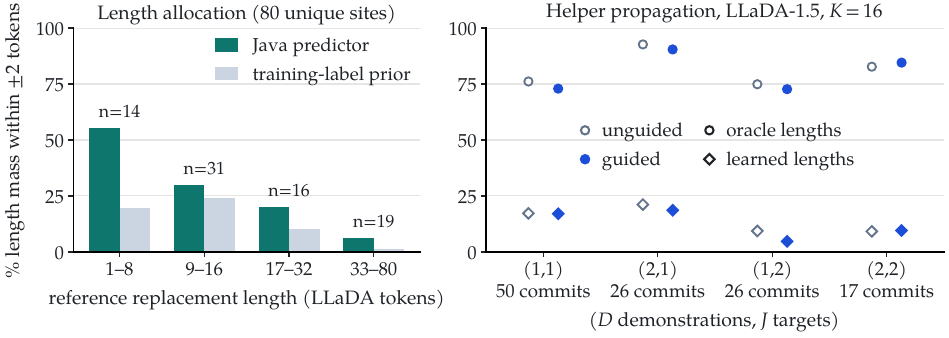}
    \caption{\textbf{Java helper refactoring with LLaDA-1.5} ($K=16$).
    \emph{Left:} mass of the Java length predictor within two tokens of the reference length versus a training-label prior, by reference length.
    \emph{Right:} joint LLM-judged success by $(D,J)$, oracle or learned lengths, and unguided ($\beta=0$, hollow) or example-guided ($\beta=0.5$, filled) evaluation of the same sample pools.
    Pooled contrasts are in \Cref{tab:case_study_results_by_cell}.}
    \label{fig:case_studies:results}
\end{figure*}

\fi

\subsection{Predicting Target Lengths}
\label{sec:experiments:lengths}

The preceding experiments assume oracle target lengths, whereas infilling in practice generally requires predicting them.
To evaluate the learned-allocation setting of \Cref{sec:methods_multitarget}, we train length predictors on structured few-shot tasks and Java edits (\Cref{app:length_details}).
On List Functions--Easy, we fix \(D=4\) and vary \(J\in\{1,2,4,8\}\) across three DLMs, comparing oracle lengths against sampled predictions with the same content generator and standard infilling (\(\beta=0\)).
\Cref{fig:predicted_length_infilling} reports downstream accuracy and sensitivity to length errors.

The learned predictors capture useful information about the required output lengths.
They assign \(40\)--\(46\%\) probability mass to the reference length, compared with approximately \(27\%\) for a prior constructed from visible demonstration lengths (\Cref{app:length_details}).
This indicates that the compact canvas contains useful predictive information about the missing output lengths.

Despite this improvement over the length prior, learned allocation remains a gap for reliable downstream infilling.
Sampling predicted lengths retains only \(35\)--\(48\%\) of oracle-length per-target accuracy (\Cref{fig:predicted_length_infilling}, top).
To examine this sensitivity directly, we impose offsets \(L_j=L_j^\star+\Delta\), where \(L_j^\star\) is the reference token length.
One-token underallocation harms LLaDA and Nemotron more than overallocation, while dQwen3.5 is sensitive to errors in either direction (\Cref{fig:predicted_length_infilling}, bottom).
These findings identify accurate length allocation as a remaining challenge for practical multi-target infilling.

\section{Case Studies in Software Refactoring}
\label{sec:case_studies}

Software refactoring often requires applying a demonstrated transformation at multiple sites.
For example, updating callers after extracting a helper requires adapting receivers, arguments, and surrounding code rather than copying an accepted edit.

\ifdefined\arxivbuild

\fi

\subsection{Helper Refactoring Experiments}
\label{sec:case_studies:experiments}

We construct multi-site editing tasks from Extract Method commits in SWE-Refactor~\citep{xu2026swerefactor}.
Each task includes the extracted helper, \(D\) accepted \texttt{<old>}/\texttt{<new>} demonstrations, and \(J\) masked replacement sites.
We evaluate 127 tasks from 50 commits across 14 repositories, with \((D,J)\in\{1,2\}^2\); different configurations have different eligible commits (\Cref{app:case_study_details:construction}).
Frozen LLaDA-1.5~\citep{zhu2026llada} generates \(K=16\) rewrites per task using either oracle lengths or lengths sampled from a separately trained Java predictor.
We compare unguided sampling (\(\beta=0\)) with example guidance (\(\beta=0.5\)) on the same samples.
We assess joint correctness using Claude Sonnet 5.5, without executing the generated code.

A separate post-hoc ablation on 80 tasks compares generation with and without one visible accepted edit, holding the other prompt components and oracle lengths fixed.
Adding the demonstration improves joint LLM-judged success by 14.7 and 17.5 percentage points for \(J=1\) and \(J=2\), respectively (\Cref{app:case_study_details:zero_demo}).
These gains indicate that the DLM can use a demonstrated refactoring to guide edits at other sites.
Under unguided sampling, pooled joint success reaches 74.4\% with oracle lengths but only 15.8\% with predicted lengths (\Cref{tab:case_study_results_by_cell}).
This gap identifies length allocation as a major opportunity for improvement, even though the predictor outperforms a simple length prior (\Cref{fig:case_studies:results}, left).
Example guidance at \(\beta=0.5\) reduces success to 71.5\% and 14.5\%, respectively (\Cref{tab:case_study_results_by_cell}).
Thus, demonstrations help the DLM generate appropriate rewrites, but the additional likelihood-ratio reward does not improve their selection in this experiment.

\subsection{Qualitative Analysis}
\label{sec:case_studies:qualitative}

The limited benefits of guidance may arise because the sampled rewrites offer little room for improvement, or because the reward fails to identify correct candidates.
With oracle lengths, only 23 of 127 sample pools contain both judged-correct and judged-incorrect rewrites, so even a perfect selector could improve pooled success by at most 8.3 percentage points.
With learned lengths, 72 of 127 pools contain both outcomes, and the corresponding upper bound rises to 45.2 points (\Cref{app:case_study_details:cases}).
Thus, limited selection opportunities help explain the oracle-length results, whereas learned-length sampling leaves substantial room for better selection.

A post-hoc Checkstyle example illustrates how the reward can favor an incorrect rewrite.
An incorrect edit to \texttt{logMessage} changes the logged line number and message type, yet receives a higher example reward than the developer-consistent rewrite (8.69 versus 2.05 nats).
At \(\beta=0.5\), this gives the incorrect rewrite approximately 28 times the resampling weight of each correct occurrence.
Despite appearing only once among \(K=16\) samples, compared with seven occurrences of the correct rewrite, it receives 61.8\% of the total weight, reducing expected LLM-judged success from 43.8\% to 15.7\%.
Here, correct rewrites are available, but the reward disproportionately favors an incorrect one (\Cref{app:case_study_details:cases}).

\section{Conclusion}
\label{sec:conclusion}

We introduce and study multi-target few-shot prompting with masked DLMs, where demonstrations guide the infilling of multiple regions within a shared canvas.
One-token-at-a-time decoding with deterministic position selection defines a normalized distribution over complete infills, enabling exact sampling, scoring, and likelihood-based example guidance.
Across few-shot prompting benchmarks and Java refactoring, we find that demonstrations improve infilling accuracy, with example guidance providing further gains in several settings.
Our experiments further investigate the benefits of example guidance relative to likelihood sharpening, while identifying opportunities to improve target-length allocation for practical editing.
Together, these findings provide a foundation for more effective example-guided multi-site infilling.

\paragraph{Limitations.}
Our framework assumes that target sites and demonstration-output positions \(E\) are specified in advance, leaving their automatic identification for future work.
The deterministic, one-token-at-a-time decoding used for exact sampling and scoring does not exploit parallel generation, while the induced distribution may depend on the reveal order and model interface for learned DLMs.
Moreover, target lengths must be allocated before infilling, making improved length prediction an important direction for practical use.
Example guidance measures relative support from demonstrations rather than correctness, so its effectiveness depends on how the reward ranks sampled infills.
Increasing guidance strength can also concentrate resampling weights without necessarily improving accuracy (\Cref{app:guidance_strength}).
Our experiments include an autoregressive comparison for single-target prompting, while systematic comparison for multi-target infilling remains future work.
Finally, the Java refactoring study uses an LLM judge rather than code execution, includes commits examined during development, and contains some prompts in which reference text appears elsewhere in the context.

\bibliography{references}

\clearpage
\appendix

\section{Proofs}
\label{app:proofs}

We prove the one-token-at-a-time decoding and reward-guidance results of \Cref{sec:methods}, then extend the construction to unknown target lengths.

\subsection{One-Token-at-a-Time Decoding}
\label{app:proofs:fixed_order}

\fixedOrderDecoding*

\begin{proof}[Proof of \Cref{lem:background_any_order}]
Fix a canvas \(z\) and a designated set \(T\subseteq M(z)\) of masked target positions.
Let \(O=\{i\in[N]:z_i\in V\}\) denote the initially visible positions, so that any masked positions outside \(T\) remain unresolved.
We consider a deterministic decoding policy that selects one unrevealed target position based on the current canvas, then samples its token from the DLM.
For any complete assignment \(u\in V^T\), replaying this policy determines a unique sequence of selected positions \(i_1,\ldots,i_{|T|}\), which may depend on previously revealed tokens.

\emph{Normalization and exact sampling.}
Consider a decision tree representing all possible decoding trajectories.
Each node corresponds to a partially revealed canvas, and the decoding policy selects the next target position to reveal.
The outgoing edges correspond to the possible tokens at that position, with probabilities given by the DLM that sum to one.
Since the policy is deterministic and eventually reveals every target position, each complete assignment \(u\in V^T\) corresponds to exactly one leaf.
The probability of reaching that leaf is the product of its edge probabilities, precisely \(P(u\mid z)\) in \Cref{eq:background_any_order}.
Summing the probabilities of all leaves therefore gives
\[
    \sum_{u\in V^T} P(u\mid z)=1.
\]
Thus, \(P(\cdot\mid z)\) is a normalized distribution, and sequential decoding samples exactly from it.
Moreover, replaying the policy on any candidate \(u\) recovers its reveal trajectory and permits exact evaluation of \(P(u\mid z)\).

\emph{Recovery of the data conditional distribution.}
Now suppose the DLM predicts exact data conditionals, and let \(X\sim P_{\mathrm{data}}\) be a clean sequence.
We assume that the initially visible context has positive probability, \(P_{\mathrm{data}}(X_O=z_O)>0\).
For a fixed candidate \(u\), define the \emph{agreement events} \(A_s\) as follows:
\[
    A_0 := \{X_O=z_O\},
    \qquad
    A_s := A_{s-1}\cap\{X_{i_s}=u_{i_s}\}.
\]
Thus, \(A_s\) is the event that a clean data sample agrees with the visible context and the first \(s\) revealed tokens of \(u\).
Because the reveal policy selects \(i_s\) using only previously revealed information, the exact-conditional assumption implies
\[
    P_{\mathrm{DLM}}(u_{i_s}\mid z^{(s-1)})
    =
    P_{\mathrm{data}}(A_s\mid A_{s-1})
    =
    \frac{P_{\mathrm{data}}(A_s)}
         {P_{\mathrm{data}}(A_{s-1})},
\]
whenever \(P_{\mathrm{data}}(A_{s-1})>0\).

If \(P_{\mathrm{data}}(A_{|T|})>0\), multiplying these conditional probabilities yields a telescoping product:
\[
    P(u\mid z)
    =
    \prod_{s=1}^{|T|}
    \frac{P_{\mathrm{data}}(A_s)}
         {P_{\mathrm{data}}(A_{s-1})}
    =
    \frac{P_{\mathrm{data}}(A_{|T|})}
         {P_{\mathrm{data}}(A_0)}
    =
    P_{\mathrm{data}}(X_T=u\mid X_O=z_O).
\]
The final equality holds because \(A_{|T|}\) specifies every target token in addition to the initially visible context.

If instead \(P_{\mathrm{data}}(A_{|T|})=0\), let \(s^\star\) be the first step at which \(P_{\mathrm{data}}(A_{s^\star})=0\).
The preceding event has positive probability, so the corresponding DLM conditional assigns probability zero to \(u_{i_{s^\star}}\).
Consequently, \(P(u\mid z)=0\), matching the data conditional.
Thus, the equality holds for every complete assignment \(u\), independently of the reveal policy.
\end{proof}

The argument covers prescribed orders and deterministic adaptive policies, including confidence-based position selection performed before sampling each token.
It also allows masked positions outside the designated targets \(T\) to remain unresolved, as in the demonstration-masked likelihood \(P_0\).
Randomized position selection may require marginalizing over multiple reveal trajectories, while parallel token revelation is outside the present construction.

\subsection{Reward Guidance and Importance Resampling}
\label{app:proofs:tilting}

\exampleTilting*

\begin{proof}[Proof of \Cref{prop:methods_sampling}]
\emph{Variational characterization.}
Since \(P_0\) and \(P_1\) are positive distributions on the finite candidate space \(\mathcal U\), the reward \(R\) is finite and the normalizing constant satisfies
\[
    Z_\beta
    =
    \expval{U\sim P_1} e^{\beta R(U)}
    \in (0,\infty).
\]
For any distribution \(Q\) on \(\mathcal U\), substituting
\(\log Q_\beta(u)=\log P_1(u)+\beta R(u)-\log Z_\beta\)
into the definition of KL divergence gives
\[
    \beta\,\expval{U\sim Q}R(U)
    - \KL{Q}{P_1}
    =
    \log Z_\beta-\KL{Q}{Q_\beta}.
\]
Since KL divergence is nonnegative and vanishes only when its arguments agree, the objective in \Cref{eq:methods_kl_characterization} is uniquely maximized by \(Q=Q_\beta\).
In particular, setting \(\beta=0\) recovers \(Q_0=P_1\).

\emph{Importance-resampling consistency.}
Let \(U_1,\ldots,U_K\) be independent candidates sampled from \(P_1\).
Given this proposal pool, the resampling procedure in \Cref{eq:methods_resampling} induces a weighted empirical distribution \(\widehat Q_{\beta,K}\).
For any completion \(v\in\mathcal U\), its probability of being selected is
\[
    \widehat Q_{\beta,K}(v)
    =
    \frac{
        \frac{1}{K}\sum_{k=1}^{K}
        \mathbf 1\{U_k=v\}\,e^{\beta R(U_k)}
    }{
        \frac{1}{K}\sum_{k=1}^{K}e^{\beta R(U_k)}
    }.
\]
The numerator accumulates the weights of all proposals equal to \(v\), including repeated occurrences, while the denominator normalizes the weights across the pool.

By the strong law of large numbers, the numerator converges almost surely to \(P_1(v)e^{\beta R(v)}\), while the denominator converges almost surely to \(Z_\beta>0\).
Their ratio therefore converges to
\[
    \widehat Q_{\beta,K}(v)
    \longrightarrow
    \frac{P_1(v)e^{\beta R(v)}}{Z_\beta}
    =
    Q_\beta(v)
    \qquad\text{almost surely}.
\]
Since the candidate space \(\mathcal U\) is finite, this convergence holds simultaneously for every completion.
Consequently,
\[
    \frac{1}{2}
    \sum_{v\in\mathcal U}
    \abs{\widehat Q_{\beta,K}(v)-Q_\beta(v)}
    \longrightarrow 0
    \qquad\text{almost surely},
\]
which establishes convergence in total variation.
\end{proof}

This guarantee is asymptotic and provides no finite-\(K\) error bound.
It also does not imply that higher example reward corresponds to greater correctness.

\subsection{Extension to Unknown Lengths}
\label{app:proofs:joint_length}

With unknown target lengths, each complete candidate consists of a length allocation \(L\) and an assignment \(u\in V^{T_L}\) to its expanded target positions.
Let \(\mathcal L\) denote the finite support of the length allocator \(\pi_\psi(\cdot\mid\widetilde z)\).
For every \(L\in\mathcal L\), \Cref{lem:background_any_order} ensures that the conditional infill distributions \(P(\cdot\mid z_L)\) and \(P(\cdot\mid z_L\setminus E_L)\) are normalized over \(V^{T_L}\).
Since the length allocator is also normalized, both joint distributions in \Cref{eq:methods_joint_lengths} are normalized.
Assuming their conditional infill probabilities are positive, the joint distributions satisfy the assumptions of \Cref{prop:methods_sampling} on their common support.
Moreover, the shared allocator cancels from the likelihood ratio, giving
\[
    R(L,u)
    =
    \log
    \frac{P(u\mid z_L)}
         {P(u\mid z_L\setminus E_L)}.
\]
Thus, reward guidance and importance resampling extend directly to joint length-content candidates.

Although the allocator cancels from each candidate's reward, guidance can still change the distribution over selected lengths.
To see this, define the length-specific normalizing constant
\[
    Z_\beta(L)
    :=
    \expval{u\sim P(\cdot\mid z_L)}
    e^{\beta R(L,u)}.
\]
The overall normalizer and the marginal guided length distribution then satisfy
\[
\begin{aligned}
    Z_\beta
    &=
    \sum_{L\in\mathcal L}
    \pi_\psi(L\mid\widetilde z)\,Z_\beta(L), \\
    Q_\beta(L)
    &=
    \sum_{u\in V^{T_L}}Q_\beta(L,u)
    =
    \frac{
        \pi_\psi(L\mid\widetilde z)\,Z_\beta(L)
    }{
        Z_\beta
    }.
\end{aligned}
\]
Hence, length allocations with larger \(Z_\beta(L)\) receive greater relative probability under guidance, even though the allocator itself is unchanged.
The argument does not require \(\pi_\psi\) to match the data length distribution or to factorize across target sites; oracle lengths correspond to a single supported allocation.

\providecommand{\panelcodefont}{\fontsize{8.3pt}{9.9pt}\selectfont}

\DefineVerbatimEnvironment{PromptBlock}{Verbatim}{%
    breaklines=true,
    breakanywhere=false,
    breakautoindent=true,
    breaksymbolleft={\tiny\ensuremath{\hookrightarrow}},
    breaksymbolright={},
    breaksymbolsepleft=0.4em,
    fontsize=\panelcodefont,
    highlightcolor=black!9,
    vspace=0pt,
    frame=none,
    xleftmargin=0pt,
    xrightmargin=0pt
}
\DefineVerbatimEnvironment{ReferenceBlock}{Verbatim}{%
    breaklines=true,
    breakanywhere=false,
    breakautoindent=true,
    breaksymbolleft={\tiny\ensuremath{\hookrightarrow}},
    breaksymbolright={},
    breaksymbolsepleft=0.4em,
    fontsize=\panelcodefont,
    vspace=0pt,
    frame=none,
    xleftmargin=0pt,
    xrightmargin=0pt
}

\tcbset{
    appendixpanel/.style={
        enhanced,
        colback=white,
        colframe=black!55,
        boxrule=0.4pt,
        arc=1pt,
        boxsep=0pt,
        left=5pt, right=5pt, top=4pt, bottom=4pt,
        before skip=6pt plus 2pt minus 1pt,
        after skip=8pt plus 2pt minus 1pt,
        fonttitle=\footnotesize\bfseries,
        coltitle=black,
        colbacktitle=black!8,
        titlerule=0.4pt,
        toptitle=2.2pt, bottomtitle=2.2pt,
        lefttitle=4pt, righttitle=4pt,
        parbox=false
    },
    promptpanel/.style={
        appendixpanel,
        breakable,
        lines before break=4,
        pad at break*=4pt
    },
    referencepanel/.style={
        enhanced,
        colback=black!6,
        colframe=black!45,
        boxrule=0pt,
        leftrule=1.4pt,
        sharp corners,
        boxsep=0pt,
        left=5pt, right=3pt, top=2.5pt, bottom=2.5pt,
        before skip=5pt,
        after skip=0pt,
        parbox=false
    }
}
\newtcolorbox{promptpanel}[2]{promptpanel, title={#1\hfill{\normalfont\footnotesize #2}}}
\newtcolorbox{referencepanel}{referencepanel}
\newcommand{\attachreference}{\par\unskip\unskip\unskip\nopagebreak}
\newcommand{\referenceoutputtext}{Reference (hidden from model)}
\newcommand{\referenceoutputlabel}{%
    \par\nointerlineskip
    \sbox0{\panelcodefont\strut}%
    \vbox to 0pt{\vss\hbox to\linewidth{\hfill
        \raisebox{-\ht0}[0pt][0pt]{\footnotesize\itshape\color{black!60}\referenceoutputtext}}}%
    \nointerlineskip
}

\section{Few-Shot Prompting Experiments}
\label{app:additional_experiments}

This appendix documents the experiments of \Cref{sec:experiments} and provides additional analyses of demonstration conditioning, example guidance, and length allocation.

\subsection{Experimental Setup}
\label{app:experiment_setup}
\label{app:benchmarks}

\paragraph{Benchmarks.}
We study four types of few-shot transformations: classification, list transformation, artificial-language transformation, and information extraction.
For SST-2~\citep{socher2013recursive,wang2018glue}, sentiment labels are replaced by two resampled single-token capital-letter codes; the first two demonstrations cover both classes, and answers are graded by exact code match.
List Functions--Easy~\citep{srivastava2022beyond,rule2020child} uses 85 unconditional functions adapted from BIG-bench List Functions; the transformation rule is hidden, and parsed integer lists are matched exactly.
SALT repetition~\citep{zheng2025curse,zheng2025logidynamics} uses categories 5--6 with a separate dictionary for each example to translate words and repeat each verb or adjective; generated pairs pass an executable-rule check, and outputs are matched exactly.
Movie Director~\citep{liu2012conversational,zhao2021calibrate} extracts the \texttt{Director} field from audited MIT Movie queries without naming the field in the instruction; labels are fixed, and outputs are graded by exact string match.

\paragraph{Experimental studies.}
We use three separately sampled studies with frozen models: a single-target demonstration sweep across four benchmarks and four model interfaces (\(J=1\), \(K=64\)); a List Functions--Easy \(D\times J\) grid across three DLMs (\(K=16\)); and a cross-benchmark target-count sweep across four benchmarks and three DLMs (\(D=4\), \(K=64\)).
Each \(K\) specifies the number of samples per canvas, and the studies are analyzed separately.

\paragraph{Prompt families.}
For single-target prompting, we draw 16 demonstrations followed by one target, then retain the first \(D\in\{1,2,4,8,16\}\) demonstrations to construct five canvases with the same target.
For multi-target prompting, we draw 16 demonstrations and eight targets.
We first randomly interleave four demonstrations and eight targets (12 items), then extend this sequence with the remaining 12 demonstrations (24 items), preserving the relative order of the original items.
Each \((D,J)\) canvas retains the first \(D\) demonstrations and first \(J\) targets in that order.
The canvases from one draw form a \emph{prompt family}; all evaluation examples come from sources disjoint from development.

\paragraph{Model interfaces and decoding.}
LLaDA-1.5~\citep{zhu2026llada} and dQwen3.5-9B-Base~\citep{xue2026dqwen} directly infill masked canvases.
Nemotron-Labs-Diffusion-8B~\citep{fu2026nemotron} was trained with limited bidirectional diffusion blocks, so we use a fixed-length causal suffix for the single-target sweep and a causal prefix with a bidirectional tail for both target-count studies, including \(J=1\).
Qwen3-8B~\citep{yang2025qwen3} provides a free-length causal-continuation comparison for single targets only.
DLMs reveal one target token at a time in ascending canvas order for both generation and likelihood scoring, using oracle lengths except in \Cref{sec:experiments:lengths}.

\paragraph{Sampling, scoring, and statistics.}
We sample at token temperature one, retain duplicate infills, and reuse each sample pool for unguided and guidance-weighted evaluation.
The \(P_1\) scores keep demonstration outputs visible, while \(P_0\) masks them for DLMs or removes them from causal prompts; sampled contents, target lengths, and reveal orders are unchanged.
Malformed outputs count as incorrect, and we average \(A_{\mathrm{all}}\) and \(A_{\mathrm{avg}}\) across prompt families with unadjusted 95\% paired family-bootstrap intervals from 10,000 resamples.
Main guidance comparisons use \(\beta=0.5\), while length-allocation experiments use \(\beta=0\).

\paragraph{Example prompts.}
The following show one \(D=2\), \(J=1\) logical prompt per benchmark, omitting model-specific chat tokens.
\texttt{[MASKED TARGET]} denotes an oracle-length mask span for DLMs, while causal generation starts after the target's \texttt{Output:}; reference answers are shown only to the reader.

\begin{promptpanel}{SST-2 with demonstration-defined codes}{$D{=}2$, $J{=}1$}
\begin{PromptBlock}[highlightlines={1,4}]
User:
Each input is followed by its output code. Produce the output code for the final input.

Assistant:
Input: an uninspired preachy and clichéd war
Output: F

Input: miraculous as its dreamworks makers
Output: E

Input: a skillful filmmaker can impart a message without bludgeoning the audience over the head
Output: [MASKED TARGET]
\end{PromptBlock}
\attachreference
\begin{referencepanel}
\referenceoutputlabel
\begin{ReferenceBlock}
E
\end{ReferenceBlock}
\end{referencepanel}
\end{promptpanel}

\begin{promptpanel}{List Functions--Easy}{$D{=}2$, $J{=}1$}
\begin{PromptBlock}[highlightlines={1,4}]
User:
Apply the transformation illustrated by the examples.

Assistant:
Input: [7, 8, 4, 9, 6, 0, 5]
Output: [5, 0, 6, 9, 4, 8, 7]

Input: [9, 3, 5, 1, 6, 2, 4]
Output: [4, 2, 6, 1, 5, 3, 9]

Input: [2, 7, 8]
Output: [MASKED TARGET]
\end{PromptBlock}
\attachreference
\begin{referencepanel}
\referenceoutputlabel
\begin{ReferenceBlock}
[8, 7, 2]
\end{ReferenceBlock}
\end{referencepanel}
\end{promptpanel}

\begin{promptpanel}{SALT repetition: repeat every verb}{$D{=}2$, $J{=}1$}
\begin{PromptBlock}[highlightlines={1,6}]
User:
Below are several examples that convert english sentence into an output sequence based on specific rules.
Each word in the input sequence either corresponds to a translated word in the output,
or indicates a syntactic rule (e.g., repeating or reordering semantic units) for forming the output sequence.

Assistant:
Input: flower bloom bee
Output: ngq mqh mqh owx

Input: bee pollinate seed colorful
Output: owx zkl zkl qqn yil

Input: pollinate colorful seed
Output: [MASKED TARGET]
\end{PromptBlock}
\attachreference
\begin{referencepanel}
\referenceoutputlabel
\begin{ReferenceBlock}
zkl zkl yil qqn
\end{ReferenceBlock}
\end{referencepanel}
\end{promptpanel}

\begin{promptpanel}{Movie Director extraction}{$D{=}2$, $J{=}1$}
\begin{PromptBlock}[highlightlines={1,5}]
User:
Extract the same kind of information as in the examples.
Copy the answer exactly from the input. Return only the answer.

Assistant:
Input: barbara stanwyck played a wicked scheming woman duping an insurance company in cahoots with fred macmurray in this classic 1944 film noir directed by billy wilder
Output: billy wilder

Input: what s the woody allen movie where owen wilson finds himself going back in time and meeting famous people
Output: woody allen

Input: what 2005 ron howard biopic tells the story of james braddock a washed up boxer who came back to become a champion
Output: [MASKED TARGET]
\end{PromptBlock}
\attachreference
\begin{referencepanel}
\referenceoutputlabel
\begin{ReferenceBlock}
ron howard
\end{ReferenceBlock}
\end{referencepanel}
\end{promptpanel}

\subsection{Single-Target Experiments}
\label{app:single_target_details}
\label{app:infilling_details}

To examine why additional demonstrations improve unguided accuracy in \Cref{sec:experiments:completion}, we measure how their presence changes the likelihood of the reference answer.
For each prompt family, we retain the same target as \(D\) grows and evaluate the reference completion \(u^\star\) under both demonstration contexts.
We report its example reward per target token:
\[
    \frac{R(u^\star)}{|T|}
    = \frac{\log P_1(u^\star)-\log P_0(u^\star)}{|T|}.
\]
This is equivalently the reduction in reference-answer negative log likelihood per token when demonstration outputs become visible.
The reward generally increases with \(D\), indicating that additional demonstrations provide greater relative support for the reference answer.
Because \(P_0\) masks demonstration outputs for DLMs but removes them from causal prompts, reward magnitudes are not directly comparable across these interfaces.

Greater support for a reference answer does not, however, guarantee a larger gain from guidance.
At \(D=1\), SST-2 exposes only one class code, while a single List Functions or SALT demonstration may be compatible with several rules.
At larger \(D\), unguided accuracy can approach saturation, leaving less room for additional improvement.
Moreover, guidance depends on how the reward ranks the sampled alternatives, not only on its value for the reference answer.

\subsection{Multi-Target Experiments}
\label{app:mj_grid}
\label{app:multi_target_details}
\label{app:multitarget_m4}

\Cref{sec:experiments:multitarget} shows that the effects of example guidance vary with the number of jointly infilled targets.
We investigate this behavior with two additional analyses: a reference-likelihood diagnostic that follows successive targets, and a separate cross-benchmark target-count study.
The latter uses \(D=4\) and \(K=64\) samples per canvas, rather than the \(K=16\) pools of the main List Functions--Easy \(D\times J\) grid.

\paragraph{Reference-likelihood diagnostic.}
Previously revealed targets may supply information that would otherwise come from the demonstrations.
To examine this possibility, we evaluate the reference tokens in reveal order with \(D=4\) and \(J=8\), making each preceding reference target visible under both \(P_1\) and \(P_0\) while keeping demonstration outputs masked under \(P_0\).
In \Cref{fig:multitarget_mechanism}, the gap between subsequent reference-token negative log likelihoods generally narrows as more targets are revealed.

\ifdefined\arxivbuild\begin{figure}[t]\else\begin{figure*}[t]\fi
    \centering
    \includegraphics[width=\textwidth]{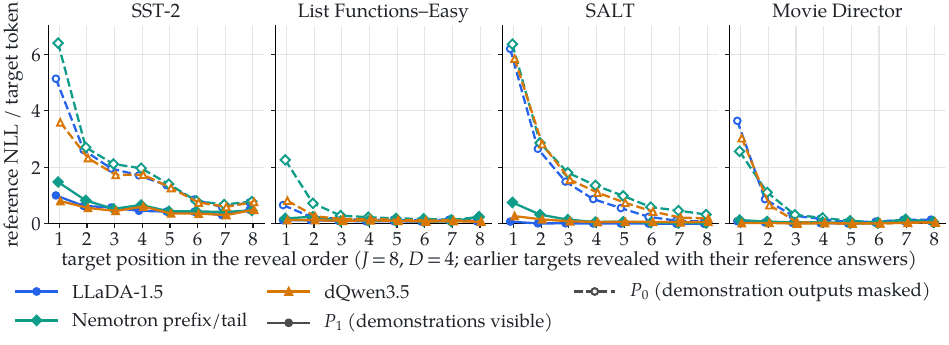}
    \caption{\textbf{Reference-token NLL under \(P_1\) and \(P_0\) by reveal position (cross-benchmark study, \(J=8\)).}
    Previously revealed reference targets are visible in both contexts, and the gap between their subsequent reference-token scores generally narrows.
    This diagnostic follows reference assignments, not sampled decoding trajectories.}
    \label{fig:multitarget_mechanism}
\ifdefined\arxivbuild\end{figure}\else\end{figure*}\fi

\paragraph{Cross-benchmark target-count study.}
The main \(D\times J\) study uses only List Functions--Easy, so we also examine whether its guidance trends hold across other tasks.
In this separate experiment, three DLMs are evaluated on four benchmarks at \(D=4\), \(J\in\{1,2,4,8\}\), and \(K=64\) (\Cref{fig:app_multitarget_m4}).
At \(J=8\), guidance at \(\beta=0.5\) improves \(A_{\mathrm{all}}\) on SST-2, decreases it on Movie Director, is nearly neutral on List Functions--Easy, and varies by model on SALT.
Thus, the reduced guidance benefits observed on List Functions--Easy do not hold uniformly across tasks.

\ifdefined\arxivbuild\begin{figure}[t]\else\begin{figure*}[t]\fi
    \centering
    \includegraphics[width=\textwidth]{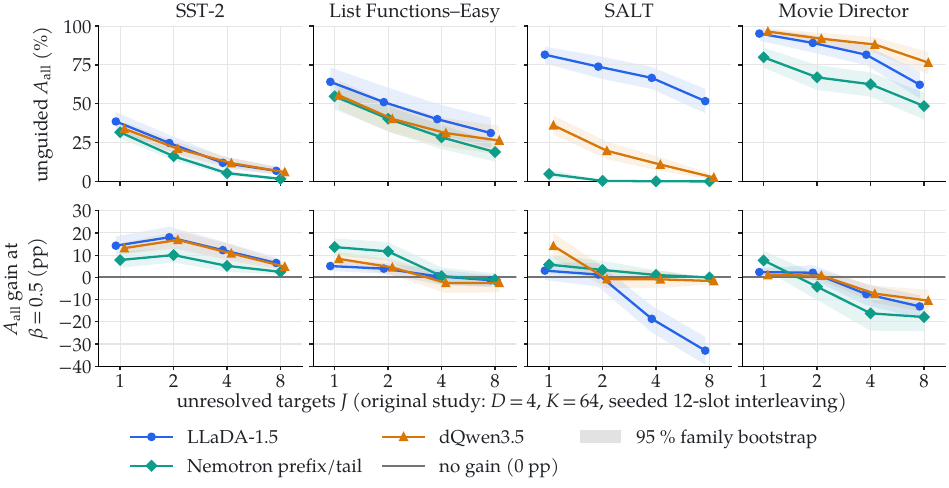}
    \caption{\textbf{Cross-benchmark target-count study at the common guidance strength.}
    Top: unguided all-target accuracy \(A_{\mathrm{all}}\); bottom: its change under example guidance at \(\beta=0.5\), which varies by task at \(J=8\).
    \(D=4\), \(K=64\) samples per canvas, oracle lengths, three DLMs, and 64 prompt families per cell; bands are unadjusted 95\% family-bootstrap intervals.}
    \label{fig:app_multitarget_m4}
\ifdefined\arxivbuild\end{figure}\else\end{figure*}\fi

\subsection{Length Allocation}
\label{app:length_details}

\paragraph{Predictor architecture.}
To evaluate infilling without oracle target lengths, we train length predictors using frozen DLM representations, following \citet{xu2026predict}.
Each compact prompt \(\widetilde z\) contains one placeholder per target, whose hidden state before the final transformer block is passed through a trainable copy of that block and its final normalization.
For LLaDA-1.5 this is block 31 of 32 with \texttt{ln\_f}; for Nemotron it is layer 33 of 34 with \texttt{encoder.norm}; and for dQwen3.5 it is layer 31 of 32 with \texttt{model.norm}.
A categorical head predicts lengths in \(\{1,\ldots,256\}\) independently across sites, conditional on the compact prompt.
Only the length predictor is trained, using prompt-normalized cross-entropy on the reference token lengths under each backbone's tokenizer; the original DLM remains frozen and jointly infills the allocated targets.

\paragraph{Training protocol.}
Training draws from a 70/30 mixture of structured few-shot prompts (\(100{,}000\) from MIT-Movie, ATIS, List Functions, and SALT) and Java editing episodes (\(22{,}128\) from 28 repositories, split 21/4/3 by repository).
We exclude evaluation source units, all 85 List Functions--Easy rules, SALT categories 5--6, the MIT-Movie \texttt{Director} field, and all 18 SWE-Refactor repositories.
The Java predictor used in \Cref{sec:case_studies} is trained separately (\Cref{app:case_study_details:zero_demo}).
For optimization, we use AdamW with weight decay \(0.01\), learning rates \(3\times10^{-5}\) for the copied block and normalization and \(3\times10^{-4}\) for the categorical head.
We train for 1,563 steps with 64 draws per update, 5\% warm-up, cosine decay, evaluating the final checkpoint.

\paragraph{Length fidelity.}
For a target with reference length \(L_j^\star\), we measure the predictor's probability mass within \(r\) tokens of that length:
\[
    F_r(\widetilde z,j)
    := \sum_{\ell:\,|\ell-L_j^\star|\leq r}
       \pi_{\psi,j}(\ell\mid\widetilde z).
\]
We average over evaluation targets and prompt families; \(F_0\) measures probability assigned to the reference length, not the accuracy of the most likely prediction.
Across backbones and target counts, \(F_0=40\)--\(46\%\) and \(F_2=53\)--\(61\%\), versus approximately 27\% exact-length mass from a prior based on four demonstration-output lengths.

\paragraph{Downstream evaluation.}
On List Functions--Easy, we fix \(D=4\), vary \(J\in\{1,2,4,8\}\), and compare oracle lengths with sampled predictor lengths using the same content decoder and \(K=64/J\) samples per canvas at \(\beta=0\).
Predicted lengths retain 35--48\% of oracle-length per-target accuracy (\Cref{fig:predicted_length_infilling}, top).
To assess sensitivity separately from the predictor's own errors, we impose \(L_j=L_j^\star+\Delta\) for \(J\in\{1,2\}\) (\Cref{fig:predicted_length_infilling}, bottom), skipping a prompt family at any \(\Delta\) for which some \(L_j^\star+\Delta<1\); these forced offsets do not represent lengths sampled from the predictor.

\ifdefined\arxivbuild
\subsection{Guidance-Strength Sensitivity}
\label{app:guidance_strength}

We examine whether increasing the guidance strength \(\beta\) improves accuracy or instead concentrates resampling weights on fewer samples.
Using stored cross-benchmark List Functions--Easy pools with \(D=4\), we compare \(J\in\{1,4\}\) at \(K=64\) over 64 prompt families per cell, reweighting the same samples for \(\beta\in\{0,0.25,0.5,1,2\}\).
To describe weight concentration, define the entropy and entropy-based effective fraction of the normalized sample weights \(\widetilde w_k(\beta)\):
\[
    H_\beta := -\sum_{k=1}^{K}\widetilde w_k(\beta)\log\widetilde w_k(\beta),
    \qquad
    \rho_{\mathrm{eff}}(\beta) := \frac{\exp(H_\beta)}{K}.
\]
The fraction \(\rho_{\mathrm{eff}}\) measures concentration over sampled draws, counting duplicates separately, rather than diversity among distinct completions.
\Cref{fig:app_guidance_strength} compares its mean against gains in \(A_{\mathrm{all}}\) relative to \(\beta=0\).

\begin{figure}[t]
    \centering
    \includegraphics[width=\textwidth]{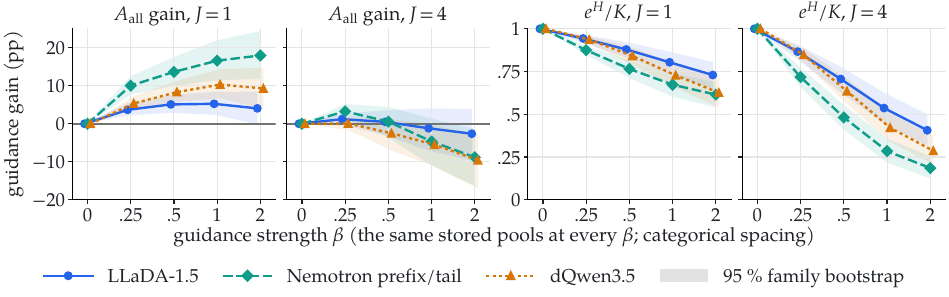}
    \caption{\textbf{Guidance-strength sensitivity on List Functions--Easy (cross-benchmark study).}
    Change in \(A_{\mathrm{all}}\) from \(\beta=0\) at \(J=1\) and \(J=4\) (left), and mean \(\rho_{\mathrm{eff}}\), the entropy-based effective fraction of sampled draws (right).
    Each strength reweights the same \(D=4\), \(K=64\) sample pools, with duplicates retained and oracle lengths; Nemotron uses a causal prefix and bidirectional tail even at \(J=1\).
    Bands are unadjusted 95\% paired family-bootstrap intervals from 10,000 resamples.}
    \label{fig:app_guidance_strength}
\end{figure}

For a single target, every tested \(\beta>0\) improves mean \(A_{\mathrm{all}}\) for all three DLMs (intervals exclude zero except for LLaDA at \(\beta=2\)).
Gains peak near \(\beta=1\) for LLaDA and dQwen3.5 (+5.2 and +10.3 percentage points), while Nemotron reaches +18.0 points at \(\beta=2\).
With four targets, however, gains reach at most +3.3 points at \(\beta=0.25\) and then diminish or reverse, reaching \(-2.6\) to \(-9.6\) points at \(\beta=2\) (intervals exclude zero for Nemotron and dQwen3.5).

Weight concentration alone does not appear to explain these accuracy trends.
When accuracy deteriorates at \(J=4\), \(\rho_{\mathrm{eff}}\) remains 0.48--0.71, whereas single-target accuracy still improves at comparable fractions (0.62--0.73 at \(\beta=2\)).
Four-target rewards also span roughly four times as many tokens and are more dispersed, so this comparison does not isolate target count as the cause.

\fi

\subsection{Likelihood Sharpening}
\label{app:power_details}

Some gains from example guidance could reflect sharpening of the demonstration-conditioned likelihood \(P_1\), rather than the additional information in the \(P_1/P_0\) reward.
To separate these effects, we compare example guidance with likelihood sharpening that uses the same total exponent on \(P_1\).
At \(\beta=0.5\), the guided distribution, the sharpening distribution, and the sample weights for sharpening are
\[
    Q_{0.5}(u) \propto P_1(u)^{3/2}P_0(u)^{-1/2},
    \qquad
    Q_{\mathrm{sharp}}(u) \propto P_1(u)^{3/2},
    \qquad
    w_{\mathrm{sharp}}(u) \propto P_1(u)^{1/2}.
\]
Both methods reweight the same samples without additional sampling or tuning.
The comparison matches the \(P_1\) exponent, but not necessarily weight concentration or a separately tuned decoding temperature.

\ifdefined\arxivbuild\begin{figure}[t]\else\begin{figure*}[t]\fi
    \centering
    \includegraphics[width=\textwidth]{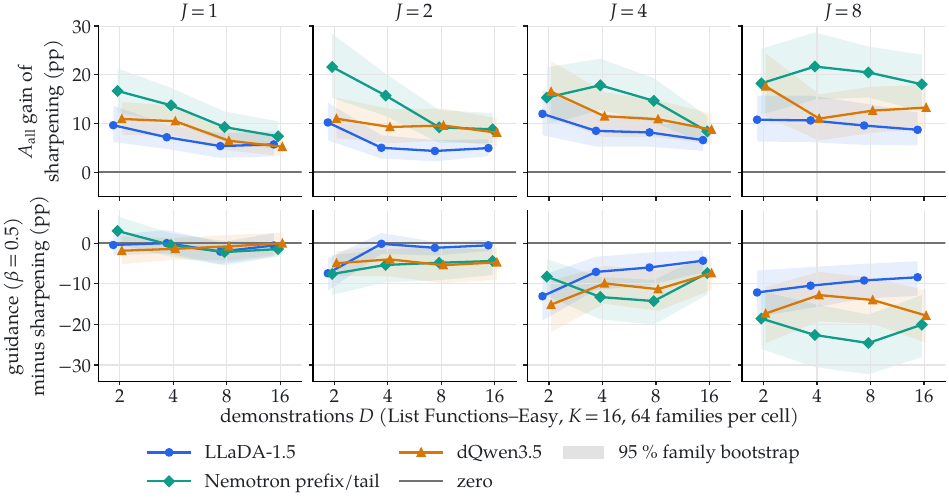}
    \caption{\textbf{Likelihood sharpening with the same total \(P_1\) exponent in the \(D\times J\) grid.}
    Top: gain in \(A_{\mathrm{all}}\) from sharpening alone (sample weights \(\propto P_1^{1/2}\), resulting distribution \(\propto P_1^{3/2}\)); bottom: example guidance minus sharpening (negative values favor sharpening).
    Both reweight the \(K=16\) sample pools of \Cref{fig:multitarget_evidence}; bands are unadjusted 95\% family-bootstrap intervals.}
    \label{fig:app_mj_power}
\ifdefined\arxivbuild\end{figure}\else\end{figure*}\fi

Likelihood sharpening explains much of the improvement from guidance, although the example-dependent reward provides an additional gain in one setting.
For LLaDA with a single target at \(D=4\), guidance improves over sharpening by +3.6 percentage points ([+0.4, +6.9]) on SST-2; no other single-target task shows a positive residual.
In the List Functions--Easy \(D\times J\) grid, sharpening improves \(A_{\mathrm{all}}\) for every \(J\) and outperforms guidance in every cell with \(J\geq4\) (\Cref{fig:app_mj_power}).
These comparisons distinguish the gains attributable to likelihood sharpening from the additional contribution of the example-dependent reward.

\providecommand{\panelcodefont}{\fontsize{8.3pt}{9.9pt}\selectfont}
\DefineVerbatimEnvironment{CaseCanvasCode}{Verbatim}{%
    breaklines=true,breakanywhere=false,breakautoindent=true,breakafter={.},
    breaksymbolleft={\tiny\ensuremath{\hookrightarrow}},breaksymbolright={},breakaftersymbolpre={},breakaftersymbolpost={},
    breaksymbolsepleft=0.4em,fontsize=\panelcodefont,
    vspace=0pt,frame=none,xleftmargin=0pt,xrightmargin=0pt
}
\DefineVerbatimEnvironment{CaseSideCode}{Verbatim}{%
    breaklines=true,breakanywhere=false,breakautoindent=true,breakafter={.},
    breaksymbolleft={\tiny\ensuremath{\hookrightarrow}},breaksymbolright={},breakaftersymbolpre={},breakaftersymbolpost={},
    breaksymbolsepleft=0.4em,fontsize=\panelcodefont,
    vspace=0pt,frame=none,xleftmargin=0pt,xrightmargin=0pt
}

\newtcolorbox{codepanel}[2][]{appendixpanel, breakable, title={#2}, #1}
\newcommand{\codepanelnote}[1]{\noindent{\footnotesize\itshape #1}\par\vspace{2pt}}
\section{Software Editing Experiments}
\label{app:case_study_details}

We describe the Java refactoring benchmark and evaluation, then analyze demonstration conditioning, length allocation, and example guidance.

\subsection{Benchmark Construction and Evaluation Setup}
\label{app:case_study_details:construction}
\label{app:case_study_details:source}
\label{app:case_study_details:eligibility}

We derive multi-site editing tasks from \emph{Extract Method} commits in SWE-Refactor~\citep{xu2026swerefactor}, which provide before-and-after snapshots and extracted helpers.
For each (repository, commit, file, helper) group, we identify updated callers and represent each minimal changed region as a \texttt{<old>}/\texttt{<new>} pair.
A seeded file-order rule assigns \(D\) accepted edits as demonstrations and \(J\) remaining sites as targets.
Eligibility rules fixed before inspecting outcomes exclude edits requiring identifiers or literals absent from the prompt or changing asserted values.
We also limit replacements to 80 LLaDA tokens and enclosing methods to 200 lines, retain one helper per commit and file, and collapse mirrored commits.
The resulting benchmark contains 127 tasks from 50 commits across 14 repositories, with \((D,J)\in\{1,2\}^2\) and at most one task per group and configuration.
Because eligible commits differ across configurations, comparisons across \((D,J)\) in \Cref{tab:case_study_results_by_cell} do not isolate the effect of demonstration or target count; results are aggregated and bootstrapped by commit.

\begin{figure*}[t]
\begin{tcolorbox}[appendixpanel, before skip=0pt, after skip=0pt,
    title={One editing canvas\hfill{\normalfont\footnotesize Checkstyle \texttt{21c8e04bc9}, \(D{=}1\), \(J{=}2\)}}]
\noindent\begin{minipage}[t]{0.485\linewidth}
\noindent{\footnotesize\textbf{Provided context}}\par\vspace{0.15em}
\noindent{\footnotesize\textit{Helper definition}}\par\vspace{0.15em}
\begin{CaseCanvasCode}
private static boolean isChildNodeTextMatches(
        DetailNode ast, Pattern pattern) {
    return getFirstChildOfType(ast,
            JavadocTokenTypes.TEXT, 0)
            .map(DetailNode::getText)
            .map(pattern::matcher)
            .map(Matcher::matches)
            .orElse(false);
}
\end{CaseCanvasCode}
\vspace{0.3em}
\noindent{\footnotesize\textit{Accepted edit (demonstration)}}\par\vspace{0.15em}
\begin{CaseCanvasCode}
private static boolean isExamplesText(
        DetailNode ast) {
<old>
    final Optional<DetailNode> resultNode =
        getFirstChildOfType(ast,
            JavadocTokenTypes.TEXT, 0);
    return resultNode.isPresent()
        && EXAMPLES_TAG.matcher(
            resultNode.get().getText()).matches();
</old>
<new>
    return isChildNodeTextMatches(
        ast, EXAMPLES_TAG);
</new>
}
\end{CaseCanvasCode}
\end{minipage}\hfill
\begin{minipage}[t]{0.485\linewidth}
\noindent{\footnotesize\textbf{Sites to complete}}\par\vspace{0.15em}
\noindent{\footnotesize\textit{Target 1}}\par\vspace{0.15em}
\begin{CaseCanvasCode}
private static boolean isPropertyList(
        DetailNode nodeLi) {
<old>
    final Optional<DetailNode>
        firstTextChildToken = getFirstChildOfType(nodeLi,
            JavadocTokenTypes.TEXT, 0);
    return firstTextChildToken.isPresent()
        && PROPERTY_TAG.matcher(
            firstTextChildToken.get()
                .getText()).matches();
</old>
<new>[MASKED TARGET]</new>
}
\end{CaseCanvasCode}
\vspace{0.3em}
\noindent{\footnotesize\textit{Target 2}}\par\vspace{0.15em}
\begin{CaseCanvasCode}
private static boolean isViolationMessagesText(
        DetailNode nodeParagraph) {
<old>
    final Optional<DetailNode> resultNode =
        getFirstChildOfType(nodeParagraph,
            JavadocTokenTypes.TEXT, 0);
    return resultNode.isPresent()
        && VIOLATION_MESSAGES_TAG.matcher(
            resultNode.get().getText()).matches();
</old>
<new>[MASKED TARGET]</new>
}
\end{CaseCanvasCode}
\end{minipage}
\begin{referencepanel}
\referenceoutputlabel
\begin{ReferenceBlock}
Target 1: return isChildNodeTextMatches(nodeLi, PROPERTY_TAG);
Target 2: return isChildNodeTextMatches(nodeParagraph, VIOLATION_MESSAGES_TAG);
\end{ReferenceBlock}
\end{referencepanel}
\end{tcolorbox}
\caption{\textbf{One editing canvas from a Checkstyle refactoring.}
An accepted edit introduces a helper call, and two masked targets must adapt it to their local arguments.
Both columns belong to the same prompt; hook arrows indicate visual line wraps.}
\label{fig:case_study_checkstyle_canvas}
\end{figure*}

The figure is an excerpt: actual prompts include instructions, chat framing, and additional context from the accepted after-file, with target edits reverted and context cropped to the helper and enclosing methods.
For \(P_0\), only changed-line tokens of demonstration outputs are masked.

\paragraph{Evaluation procedure.}
Frozen LLaDA-1.5~\citep{zhu2026llada} generates \(K=16\) rewrites per task under each length setting; unguided sampling and example guidance use the same stored samples.
We use Claude Sonnet 5.5 (\texttt{claude-sonnet-5-5}) as a Claude Code sub-agent, with a fixed judging rubric and fresh context per call.
Each judging packet contains the helper and its Javadoc, accepted demonstrations, and target methods with candidate rewrites inserted, but excludes developer references, generation settings, length allocations, and model scores.
The judge assesses whether the edits adapt receivers, arguments, and control flow while plausibly preserving behavior; equivalent rewrites may pass, while incomplete or erroneous edits fail.
Joint success requires every target to be judged correct; uncertain judgments count as failures, and ill-posed packets remain in the evaluation.
Duplicate candidate texts are judged once per task, but duplicate draws remain in the sample pools; we report unadjusted 95\% paired commit-bootstrap intervals.
We do not compile or execute the generated code.

\paragraph{Judge checks and limitations.}
The 4,064 generated draws yielded 2,089 distinct judging packets after deduplication within tasks, including across length settings.
The judge classified all 46 blinded validation cases as expected, and ten repeated judgments agreed, although these checks do not replace execution-based evaluation.
Its inherited agent context contained brief summaries of earlier Section~5 results, but no packet-specific labels; judge sampling parameters were not explicitly set or recorded.
All 50 source commits were examined during development, and some prompts contain target-reference text elsewhere in the helper or demonstrations, so this study is not a held-out evaluation.

\subsection{Demonstration Conditioning and Length Allocation}
\label{app:case_study_details:zero_demo}
\label{app:case_study_details:length}

\paragraph{Demonstration ablation.}
To measure the effect of demonstration conditioning, we compare \(D=0\) and \(D=1\) on the same 80 editing tasks (52 with \(J=1\), 28 with \(J=2\)) in a post-hoc ablation.
Both conditions share the helper, target sites, oracle lengths, chat framing, \(K=16\), and unguided sampling (\(\beta=0\)).
They use the instruction: ``Complete each target \texttt{<new>} span by applying the intended refactoring using the helper shown in the source context; use any accepted before/after edits if provided.''
Only demonstration visibility changes during generation; the judge sees the accepted edit in both conditions.

Adding one demonstration improves joint LLM-judged success by 14.7 percentage points for \(J=1\) (\([+5.8,+24.5]\)) and 17.5 points for \(J=2\) (\([+4.7,+32.5]\)).
The estimated gain is smaller on commits outside the narrower LLaDA development subset (\(+5.3\) points, \([-1.0,+13.9]\)).
The ablation was designed after inspecting the main results but fixed before collecting its own outcomes; its instruction differs from the main experiment, so absolute success rates are not directly comparable.

\paragraph{Java length predictor.}
We train a Java-specific length predictor using the LLaDA-1.5 architecture and objective of \Cref{app:length_details}, with edits from 21 training and four development repositories in the 28-repository corpus; the remaining three repositories and all SWE-Refactor repositories are excluded.
Training prompts contain an optional helper declaration, \(D\) accepted \texttt{<old>}/\texttt{<new>} demonstrations, and \(J\) targets with visible \texttt{<old>} spans and one placeholder per missing \texttt{<new>} span.
We use seeded, disjoint site assignments, retain up to 20 context lines before and 12 after each edit, and exclude empty replacements, lengths above 256 tokens, and prompts exceeding 2,048 tokens.
Reference lengths are withheld from the inputs, with prompt-invariance checks to detect leakage.
This yields helper-only and all-multi-site corpora with 1,887/370 and 13,543/3,598 training/development prompts, respectively, where the latter includes the former.
Training alternates between the corpora equally for 313 updates (batch size 64, 16 warm-up updates, cosine decay), with other settings following \Cref{app:length_details}.
We also train an alternative on the all-multi-site corpus alone and select by exact-length probability mass \(F_0\) on 488 helper-only development targets (0.0664 versus 0.0660), although within-two-token mass \(F_2\) would favor the alternative (0.317 versus 0.328); SWE-Refactor outcomes are not used for selection.

\paragraph{Length evaluation.}
At evaluation, the selected predictor runs once per task and independently samples target lengths for each of \(K=16\) rewrites, without temperature adjustment or truncation.
Oracle lengths equal the developer-reference token counts; 3.3\% of predicted per-target lengths exceed the benchmark's 80-token reference-length eligibility limit.
We compare length fidelity against the empirical distribution of 2,467 replacement lengths in the helper-only training corpus, distinct from the demonstration-length prior in \Cref{sec:experiments:lengths}.
Across 80 unique target sites, the predictor assigns 26.7\% probability mass within two tokens of the reference length (\([20.1,33.8]\)), compared with 15.2\% for this prior; exact-length mass is 6.8\% versus 3.1\%.
However, only 6.2\% of learned-length joint samples allocate every target exactly.
Among learned-length samples, 95\% with exact joint allocations are judged correct, compared with 13\% for near-but-incorrect allocations and 8\% otherwise.
These descriptive associations help explain the large oracle-versus-learned accuracy gap in \Cref{tab:case_study_results_by_cell}, while highlighting the importance of accurate joint length allocation.

\subsection{Example Guidance and Failure Analysis}
\label{app:case_study_details:cases}
\label{app:case_study_details:results}
\label{app:software:failures}

Example guidance at \(\beta=0.5\) reduces pooled LLM-judged joint success by 2.9 percentage points with oracle lengths (\([-5.4,-0.6]\)) and 1.3 points with learned lengths (\([-3.8,+1.6]\)); \Cref{tab:case_study_results_by_cell} reports the configuration-specific results.
The oracle-length decline also persists in the prespecified sensitivity population (\(-3.3\) points, \([-6.1,-1.1]\)) and among 30 commits outside the narrower LLaDA development subset (\(-3.1\) points, \([-6.7,-0.1]\)).

\begin{table*}[t]
\centering
\caption{Joint LLM-judged success (\%) by configuration, with \(K=16\) samples per editing task and length setting and learned lengths sampled from the Java predictor.
Differences are in percentage points, with unadjusted 95\% paired commit-bootstrap intervals.
The pooled row weights 50 commits equally and is not the mean of the four configuration rows.}
\label{tab:case_study_results_by_cell}
\footnotesize
\setlength{\tabcolsep}{3.2pt}
\ifdefined\arxivbuild\setlength{\tabcolsep}{2pt}\fi %
\begin{tabular}{lrrrlrrll}
\toprule
 & & \multicolumn{3}{c}{Oracle lengths} & \multicolumn{3}{c}{Learned lengths} & Learned\(-\)oracle \\
\cmidrule(lr){3-5}\cmidrule(lr){6-8}
\((D,J)\) & Commits/tasks & \(\beta=0\) & \(\beta=0.5\) & Difference & \(\beta=0\) & \(\beta=0.5\) & Difference & at \(\beta=0\) \\
\midrule
(1,1) & 50/52 & 76.1 & 72.9 & \(-\)3.2 [\(-\)5.9, \(-\)0.8] & 17.3 & 17.1 & \(-\)0.2 [\(-\)3.4, +3.9] & \(-\)58.9 [\(-\)68.1, \(-\)49.5] \\
(1,2) & 26/28 & 74.9 & 72.7 & \(-\)2.2 [\(-\)4.7, \(-\)0.2] & 9.4 & 4.7 & \(-\)4.6 [\(-\)8.9, \(-\)1.3] & \(-\)65.5 [\(-\)79.8, \(-\)50.5] \\
(2,1) & 26/28 & 92.7 & 90.4 & \(-\)2.2 [\(-\)6.9, +0.9] & 21.2 & 18.6 & \(-\)2.5 [\(-\)9.3, +5.9] & \(-\)71.5 [\(-\)82.6, \(-\)59.1] \\
(2,2) & 17/19 & 82.7 & 84.5 & +1.8 [0.0, +5.2] & 9.2 & 9.6 & +0.4 [\(-\)5.2, +8.0] & \(-\)73.5 [\(-\)88.4, \(-\)56.6] \\
\midrule
Pooled & 50/127 & 74.4 & 71.5 & \(-\)2.9 [\(-\)5.4, \(-\)0.6] & 15.8 & 14.5 & \(-\)1.3 [\(-\)3.8, +1.6] & \(-\)58.6 [\(-\)67.4, \(-\)49.6] \\
\bottomrule
\end{tabular}
\end{table*}

Selection helps only when a sample pool contains both correct and incorrect rewrites.
Among 127 oracle-length pools, 87 contain only judged-correct samples, 17 only incorrect samples, and 23 both; the corresponding counts with learned lengths are 0, 55, and 72.
Selecting a correct rewrite whenever one is available could therefore improve pooled success by at most 8.3 points with oracle lengths, versus 45.2 points with learned lengths.
Even within the 23 mixed oracle-length pools, guidance lowers expected success from 62.0\% to 50.5\% (\(-11.5\) points, \([-19.7,-3.1]\)); this is a post-hoc comparison on an outcome-selected subset.

A post-hoc Checkstyle example (commit \texttt{b142dda472}, distinct from \Cref{fig:case_study_checkstyle_canvas}) illustrates how the reward can favor an incorrect rewrite.
The developer-consistent rewrite \(u_+\) of \texttt{logMessage} preserves the logged line number and \texttt{MSG\_DUPLICATE\_REGEXP}, whereas an incorrect rewrite \(u_-\) changes both, yet receives a higher reward (8.69 versus 2.05 nats).
The following code excerpts omit the enclosing method's closing braces.

\begin{codepanel}[colbacktitle=green!60!black!12]{Developer's rewrite \(u_+\)\hfill{\normalfont\footnotesize 7 of 16 draws}}
\begin{CaseSideCode}
final String msg = getMessage();
if (illegalPattern) {
    log(lineNumber, MSG_ILLEGAL_REGEXP, msg);
}
else {
    log(lineNumber, MSG_DUPLICATE_REGEXP, msg);
\end{CaseSideCode}
\end{codepanel}

\begin{codepanel}[colbacktitle=red!60!black!12]{Incorrect rewrite \(u_-\)\hfill{\normalfont\footnotesize 1 of 16 draws}}
\begin{CaseSideCode}
final String msg = getMessage();
if (illegalPattern) {
    log(lineNumber - 1, MSG_ILLEGAL_REGEXP, msg);
}
else {
    log(lineNumber, MSG_REQUIRED_REGEXP, msg);
\end{CaseSideCode}
\end{codepanel}

At \(\beta=0.5\), the reward difference gives each occurrence of \(u_-\) approximately 28 times the resampling weight of an occurrence of \(u_+\).
The single incorrect draw consequently receives 61.8\% of the total weight, compared with 15.7\% across the seven correct draws, lowering expected LLM-judged success from 43.8\% to 15.7\%.
Nevertheless, the ideal guided distribution favors \(u_+\) over \(u_-\) in their pairwise comparison:
\begin{equation*}
\frac{Q_{0.5}(u_-)}{Q_{0.5}(u_+)}
    = \exp(-3.597+3.319) \approx 0.76,
    \qquad
\frac{\widehat Q_{0.5,16}(u_-)}{\widehat Q_{0.5,16}(u_+)}
    = \frac{\exp(3.319)}{7} \approx 3.95.
\end{equation*}
The original sampling distribution favors \(u_+\) by a much larger margin than the observed seven-to-one draw count; finite-pool resampling therefore reverses this pairwise preference.
These ratios compare only the two displayed rewrites, not the total probability of a correct edit.

\section{Additional Discussion}
\label{app:additional_discussion}

\paragraph{Broader impacts.}
Multi-target few-shot prompting could enable example-guided document and code editing across multiple sites.
Practical use will require more reliable adherence to demonstrations and better prediction of output lengths.
With these improvements, users could specify transformations through a few examples rather than editing each site manually.

\paragraph{Potential risks.}
Incorrect transformations can propagate across sites, altering document meaning or introducing software defects.
Our Checkstyle case study shows that example guidance can favor an incorrect edit even when demonstrations support the intended transformation.
Practical use requires review, including compilation and testing for code; our evaluation instead relies on an LLM judge without execution.

\paragraph{Artifacts and licensing.}
The models are released under MIT (LLaDA~1.5), Apache~2.0 (dQwen3.5 and Qwen3), and the NVIDIA Open Model License (Nemotron-Labs-Diffusion).
The BIG-bench and SALT repositories use Apache~2.0 and MIT, respectively; SWE-Refactor is marked CC~BY~4.0, but its underlying Java files retain their upstream licenses.
We could not verify uniform redistribution terms for SST-2 or MIT Movie, so we do not assume unrestricted redistribution rights.

\paragraph{Compute.}
All experiments use 8--9B-parameter models on NVIDIA H100 NVL (96 GB) GPUs, one model per GPU; only the length predictors are trained.
Run logs suggest fewer than 100 H100 GPU-hours, although cumulative GPU time was not tracked separately.

\paragraph{AI usage.}
We used generative AI assistants for literature exploration, manuscript drafting and revision, and code and figure preparation.
The authors remain responsible for the methods, citations, results, and final text.
Separately, Claude Sonnet~5.5 was used to judge software edits under the experimental protocol of \Cref{sec:case_studies:experiments}.

\end{document}